\numberwithin{equation}{section}
\theoremstyle{plain}
\renewcommand{\leq}{\leqslant}
\renewcommand{\geq}{\geqslant}
\newcommand{\inner}[2]{\langle #1, #2 \rangle}
\newcommand{\spanf}[1]{\langle #1 \rangle}
\newtheorem{theorem}{Theorem}[section]
\newtheorem{corollary}{Corollary}[section]
\newtheorem{lemma}{Lemma}[section]
\newenvironment{customthm}[1]
  {\innercustomthm}
  {\endinnercustomthm}
\newenvironment{customcor}[1]
  {\innercustomcor}
  {\endinnercustomcor}
\DeclareMathOperator{\spn}{span}
\DeclareMathOperator{\Proj}{\mathbb{P}}
\DeclareMathOperator{\Categorical}{Categorical}
\DeclareMathOperator{\Crit}{Crit}
\DeclareMathOperator{\pen}{pen}
\newcommand{\MAP}{\text{MAP}}
\begin{document}

\begin{frontmatter}

\runtitle{Change point and clustering}
\title{Bayesian Model Selection for Change Point Detection and Clustering}

\begin{aug}
\author{\fnms{Othmane} \snm{Mazhar,}\thanksref{}\ead[label=e1]{othmane@kth.se}}
\author{\fnms{Cristian}\snm{ R. Rojas,}\thanksref{}\ead[label=e2]{cristian.rojas@ee.kth.se}}
\author{\fnms{Carlo} \snm{Fischione}\thanksref{}\ead[label=e3]{carlofi@kth.se
}}\\
\and
\author{\fnms{Mohammad} \snm{R. Hesamzadeh} \thanksref{} \ead[label=e4]{mrhesamzadeh@ee.kth.se}}

\runauthor{O. Mazhar Author et al.}

\affiliation{Royal Institute of technology\thanksmark{m1}}

\address{Othmane Mazhar and Cristian R. Rojas\\
Division of Decision And Control Systems\\
School of Electrical Engineering and Computer Science\\
KTH Royal Institute of Technology\\
SE-100 44 Stockholm, Sweden.\\
\printead{e1}
\phantom{E-mail:\ }\printead*{e2}}

\address{Carlo Fischione \\
Division of Network And Sytems Engineering\\
School of Electrical Engineering and Computer Science\\
KTH Royal Institute of Technology\\
SE-100 44 Stockholm, Sweden.\\
\printead{e3}}

\address{Mohammad R. Hesamzadeh \\
Division of Electric Power And Energy Systems\\
School of Electrical Engineering and Computer Science\\
KTH Royal Institute of Technology\\
SE-100 44 Stockholm, Sweden.\\
\printead{e4}}
\end{aug}

\begin{abstract} 
We address the new problem of estimating a piece-wise constant signal with the purpose of detecting its change points and the levels of clusters. Our approach is to model it as a nonparametric penalized least square model selection on a family of models indexed over the collection of partitions of the design points and propose a computationally efficient algorithm to approximately solve it. Statistically, minimizing such a penalized criterion yields an approximation to the maximum a-posteriori probability (MAP) estimator. The criterion is then analyzed and an oracle inequality is derived using a Gaussian concentration inequality. The oracle inequality is used to derive on one hand conditions for consistency and on the other hand an adaptive upper bound on the expected square risk of the estimator, which statistically motivates our approximation. Finally, we apply our algorithm to simulated data to experimentally validate the statistical guarantees and illustrate its behavior.
\end{abstract} 

\end{frontmatter}

\section{Introduction}

In many relevant scientific and engineering fields one is presented with time series data switching back and forth between different regimes. A classical estimation problem in this setting is the well-studied change point detection problem where one tries to estimate when some properties of the sequence of the random variables changes. This local property is of prime importance in many learning tasks such as signal segmentation \cite{Abou-Elailah16,Kim09}, change point detection in comparative genomics for early cancer diagnosis \cite{Lai05}, and modeling and forecasting of changes in financial data \cite{Lavielle06,Spokoiny09}.

For other applications, one needs more than this local answer and is interested in a more general overview of the time series where for instance earlier data samples behave like new ones. Examples of this are found in: electricity market data, where prices might have different behavior corresponding to different price regimes that might reappear depending on some triggering events; signal partitioning with some parts of the signal sharing similar properties; and speech segmentation with different alternating sources. Generally speaking, it is of interest in these situations to determine the change points and the clusters for a more precise description of the inhomogeneous time series. 

Parametric models for solving the change point detection problem have been proposed in \citet{Cleynen14} and \citet{Rigaill12}. However, in dealing with the change point and clustering problem we would naturally require that our solution does not assume any knowledge of the number of changes nor the actual number of clusters, as these numbers would evolve over time, so we expect new changes in the process to happen and new clusters to form as $N$, the number of samples, grows. Thus, any practical procedure should be able to estimate these numbers and also have adaptive guarantees with respect to how fast these numbers grow. Similar setups for change point detection have been the subject of study by \citet{Harchaoui07a}, \citet{Arlot16} and \citet{Garreau17} who use characteristic kernels for detecting changes in the distribution, while from a computational stand point a more effective implementation has been proposed by \citet{Celisse17}. In this study, we will restrict ourselves to an \emph{iid} (independent and identically distributed) Gaussian sequence model of the data with known variance, noting that the same study can be done using kernels and that the algorithm we develop can be effectively implemented using the same procedure as in \cite{Celisse17}, as explained later in the paper.

Two other related lines of research, but which we do not explore here, are on-line algorithms for segmentation and $L_1$-regularized segmentation. We refer the reader to \cite{Tartakovsky14} for an extensive review of on-line algorithm for change point detection. Data segmentation using the $L_1$-penalty, also known as total variation denoising, was introduced by \citet{Rudin92}. The one dimensional case, corresponding to the Fussed LASSO, has been studied in \citep{Tibshirani05} and \citep{Rennie69} and an efficient algorithm  has been proposed by \citet{Arnold16}. More recent results can be found in \citep{Dalalyan17} for the one dimensional case and \citep{Hütter16} for two dimensional case.

\textbf{Main contribution}: The change point detection and clustering for sequences of data points does not seem to have been previously studied. In this work we propose a two-pass dynamic programming algorithm for selecting an adequate model from a collection of candidate models. We motivate the choice of the algorithm computationally by showing that it runs in $\mathcal{O}(N^2D+D^4)$ time (where $D$ is an upper bound on the number of change points), statistically by showing that it can be seen as an approximation of a computationally hard MAP optimization problem for which we can derive an oracle inequality that guarantees low sample complexity, consistency and adaptivity, and practically by testing the model on simulation data.

\textbf{Structure of the paper}: In Section~2 we formulate the problem as one of nonparametric model selection from a family of models over all partitions of the data set. After some preliminaries and notations are given in Section~3, we propose in Section~4 a two-pass dynamic programing algorithm as a computationally effective relaxation of the optimization criterion and analyze its computational cost. We then put the model selection problem in a  Bayesian framework in Section~5, and use a Laplace-type approximation to derive as optimization criterion the maximum a-posteriori probability. In Section~6 we derive an oracle inequality for the criterion that our algorithm is approximating, and study its properties. Experimental results showing that the clusters and segments can be obtained effectively estimated are presented in Section~7 using simulation data.

\section{Problem formulation}

Let $\mathcal{Y}$ be a measurable space and $Y_1,Y_2,\dots,Y_N \in \mathcal{Y}$ denote random variables with distributions $P_{Y_i}$. Our goal is on one hand to detect changes in the sequence of distribution measures $(P_{Y_i})_{i=1}^N$ and on the other hand to cluster the data points coming from the same process. Hence we put random variables between two consecutive changes in the same segment, and we think of random variables of the same segment or different segments as belonging to the same cluster, if they are the realization of the same process. 

One important case both in theory and in practice is the uniform constant design model were the $Y_i$s depend on deterministic variables uniformly spaced on a grid $X_i=i$ for $i \in \llbracket 1,N \rrbracket := \{1, \dots, N\}$ through a regression function $f^*$ with an additive \emph{iid} random noise $(\epsilon_i)_{i=1}^N$. Taking the distribution of the $\epsilon_i$'s as $\mathcal{N}(0,\sigma^2)$ with known variance, we end up with the following Gaussian sequence model:
\begin{equation} \label{regression}
Y_i=f_i^*+\epsilon_i, \quad \text{ for } \ i \in \llbracket 1,N \rrbracket.
\end{equation}
Here we are placed in a regression setting of the form $Y=f^*+\epsilon$, where $Y = [Y_1\,\,\cdots\,\,Y_N]^T$, $f^* = [f_1^*\,\,\cdots\,\,f_N^*]^T$ and $\epsilon = [\epsilon_1\,\,\cdots\,\,\epsilon_N]^T \sim \mathcal{N}(0,\sigma^2I_N)$, and we are interested in estimating $f^*$ as a piecewise constant function that takes a limited number of values. 

We emphasize that it is unlikely that the data correspond exactly to a piecewise constant function plus independent random Gaussian noise and that we are in this low dimensional hidden structure exactly, yet there might exist a good sparse linear approximation. Hence our search is not for an exact model, rather we are trying to select the best model in a collection of candidates, as we explain in the next section.

\section{Preliminaries and notation}

We would like to perform dimensionality reduction by exploiting the hidden structure on the data sequence $Y_1,Y_2,\dots,Y_N$. To do this we split it into different segments while also putting the segments sharing same mean into the same cluster. Hence if we knew the clusters our problem reduces to fitting a constant to a set of observations over each cluster. Observe that if $f^*$ is constant over parts of $\llbracket 1, N \rrbracket$, the it determines a clustering of $Y_1,Y_2,\dots,Y_N$ over the values where it is constant. Hence, we can think about the problem as, first determining the clustering of the $Y_1, Y_2, \dots, Y_N$ which would result in a partition $\pi$ of $\llbracket 1,N \rrbracket$, and then choosing the best value of $\hat{f}$ over each part as our estimate. So $f^*$ belong to the subspace $\mathcal{F}_{\pi}$: subspace of functions that are constant over the parts of the partition $\pi$. 

To formalize this, let $\mathcal{M}$ be an index set over the collection of partitions $\Pi_N$ of $\llbracket 1,N \rrbracket$; given $m \in \mathcal{M}$, denote by $\mathcal{F}_m$ the subspace of functions that are constant over the parts of $\pi_m$. Our goal is two-fold: find $\hat{m}$ as the index estimate of $\mathcal{F}_{\hat{m}}$, the subspace where the estimate of $f^*$ lives, and from $\mathcal{F}_{\hat{m}}$ compute $\hat{f}_{\hat{m}}$ as our estimate. We represent a partition $\pi$ as an unordered collection of its subsets $\pi=\{[1],[2],\dots,[|\pi|]\}$ with $[k]$ being the $k^{th}$-equivalent class, -part or -cluster, and $|\pi|$ the cardinality of the partition. Every part $[k]$ can be seen as the union of segments $[k]=\{[k_1],[k_2],\dots,[k_{|[k]|}]\}$ where $(k_i)_{i=1}^{|[k]|}$ is the collection of maximal intervals in $[k]$ that we call segments of the $k^{th}$-cluster. The last element in each segment $[k_i]$ is called a change point. We define $d'_m := |\pi_m|-1 = \dim(\mathcal{F}_m)-1$ as the clustering dimension. Even though this choice might create some confusion it will be consistent the notations used in the proofs of sections 5 and 6. Also we define $d''_m= | \pi_m|_0 := |\cup_{k=1}^{d_m'+1}[k]|$ as the change point dimension. 

To link partitions to subspaces let $e_l := (0,\dots,1,\dots,0)$ be the $l^{th}$-component of the standard orthonormal basis of $\mathbb{R}^N$, and define for a subset $A$ of $\llbracket 1,N \rrbracket$ the vector $\mathbbm{1}_A:=\sum_{l \in A} e_l$. For $[k]$, the $k^{th}$ cluster of $\pi_m$, with a slight abuse of notation we define $\mathbbm{1}_{[k]} := \sum_{i=1}^{|[k]|} e_{k_i}$, and observe that $\mathcal{F}_m=\spn\{\mathbbm{1}_{[k]}\colon k \in \pi_m\}$, which is consistent with the definition of the clustering dimension $d'_m := |\pi_m|-1 = \dim(\mathcal{F}_m)-1$.

We define $\spanf{f^*} := \spn\{f^*\}$, $\mathcal{S}_1 \oplus \mathcal{S}_2$ as the direct sum of the two vector space $\mathcal{S}_1 $ and $ \mathcal{S}_2$, and $\mathcal{S}_1 \ominus \mathcal{S}_2$ as their direct difference. $\Proj_{\mathcal{S}}$ denotes the (orthogonal) projection operator onto the subspace $\mathcal{S}$. We also define the partitions inclusion as $m_1 \subset m_2$ if $\mathcal{F}_{m_1} \subset \mathcal{F}_{m_2}$, or equivalently if $\pi_{m_2}$ is finer than $\pi_{m_1}$. 

\begin{figure}[t]
\begin{center}
\centerline{\includegraphics[width=\columnwidth]{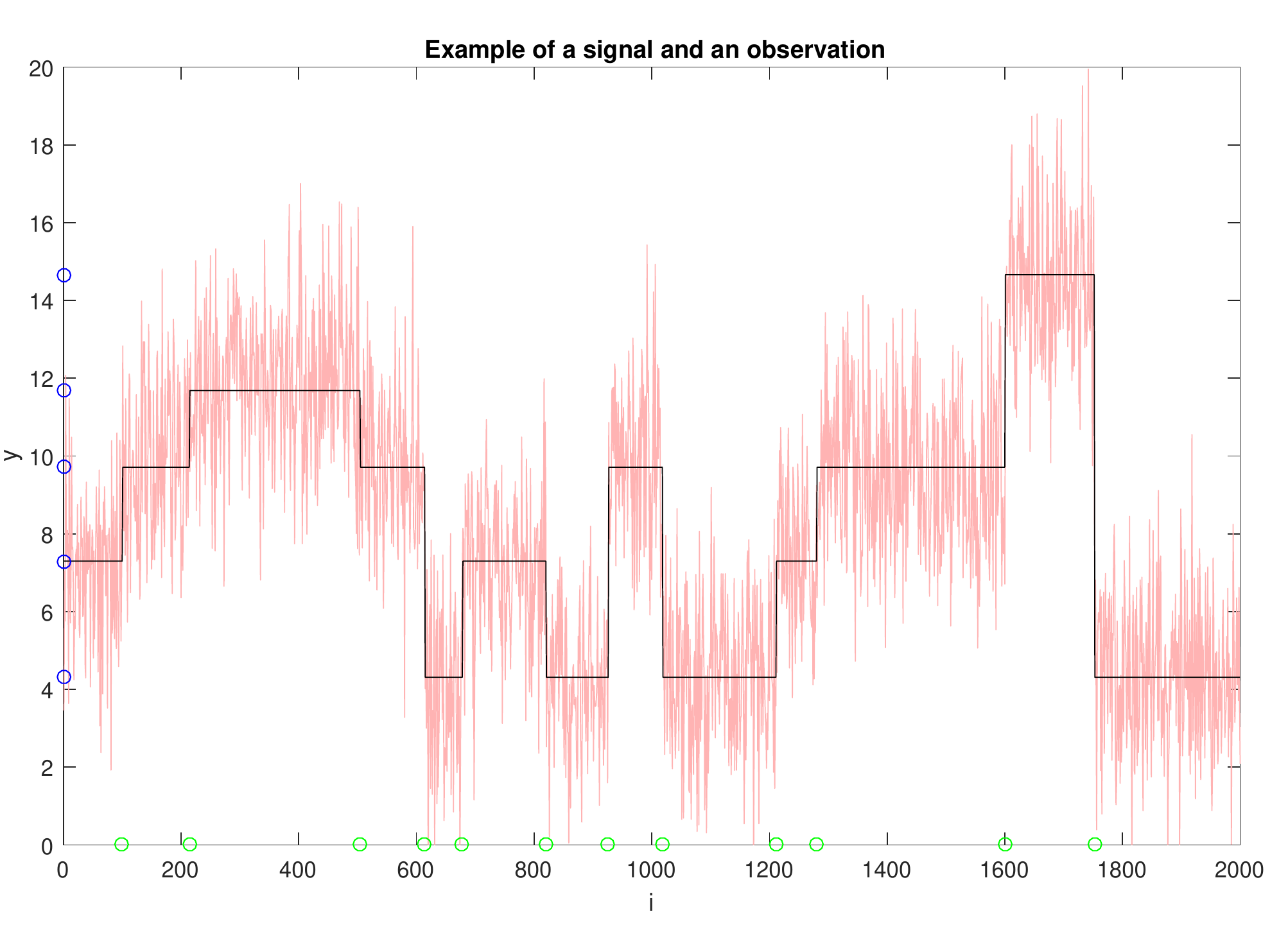}}
\caption{Example of a piecewise constant signal $f^*$ (black line) and observed signal $Y$ (pink line) with clustering values  (blue circles) and change points (green circles).}
\label{fig1}
\end{center}
\end{figure}

\textbf{Example 1.} \begin{small} \it
Consider the signal $f^*$ of Figure~\ref{fig1}, whose partition is 
\begin{align} \label{example}
\pi = \{ [ 1 ];[ 2 ];[ 3 ];[ 4 ];[ 5 ] \},
\end{align}
where
\begin{align*}
[1] &=\llbracket 615,678 \rrbracket \cup \llbracket 821,926 \rrbracket \cup \llbracket 1019,1211 \rrbracket\ \cup \llbracket 1753,2000 \rrbracket \\
[2] &=\llbracket 1,100 \rrbracket \cup \llbracket 679,820 \rrbracket \cup \llbracket 1212,1280 \rrbracket\ \\
[3] &=\llbracket 101,214 \rrbracket\ \cup \llbracket 505,614 \rrbracket \cup \llbracket 926,1018 \rrbracket \cup \llbracket 1281,1600 \rrbracket \\
[4] &=\llbracket 215,504 \rrbracket\\
[5] &=\llbracket 1601,1752 \rrbracket.
\end{align*}
Hence, $d_{\pi}'= 4$ and $d_{\pi}''= 12$ for this signal.
\end{small}
 
We also denote by $C_k^N$ the binomial coefficient that gives the number of ways, disregarding order, that $k$ objects can be chosen from among $N$ objects. This is given by
\begin{align}
C_k^N := \frac{N!}{k!(N-k)!}. \label{binomial coefficient}
\end{align}
The Stirling numbers of the second kind, $S(N,k)$, correspond to the number of ways to partition a set of $N$ objects into $k$ non-empty subsets, or, similarly, to the number of different equivalence relations with precisely $k$ equivalence classes that can be defined on an set of $N$ elements.

We are precisely interested in the case where the element set is $\llbracket 1,N \rrbracket$ and the distance between every two elements in each equivalence class is at least 2; we denote the number of such equivalent classes by $S^2(N,k)$. $S(N,k)$ and $S^2(N,k)$ satisfy the following recurrence relations:
\begin{align}
S(N,k) &= S(N-1,k-1) + kS(N-1,k), \quad N \geq k, \nonumber \\
S^2(N,k) &= S(N-1,k-1), \quad N, k \geq 2. \label{recursive Stirling}
\end{align}
For the proofs of these results, we refer the reader to \cite{ConcreteMathematics} and \cite{AMohr}.

\section{Two-pass dynamic programming for change point detection and clustering}
To solve the change point and clustering problem, a natural approach is to consider the minimization of a criterion of the form
\begin{align} \label{Criterion}
\Crit(m) = \|y-\hat{f}_m\|_2^2 + \sigma^2K\pen(m),
\end{align}
with a penalty term $\pen(m)$ depending only on $d'$ and $d''$ and a multiplicative tuning parameter $K$. Indeed, as we shall see the penalty can be chosen such that the minimizer $\hat{f}_{\hat{m}}$ of \eqref{Criterion} behaves like an approximation to a maximum a-posteriori estimator (MAP), and also, the average expected risk $\frac{1}{N}\mathbb{E}[\|\hat{f}_{\hat{m}}-f^*\|_2^2] \to 0$ for a large class of signals $f^*$, namely, those corresponding to models with $d'\leq d''= o(N/\ln N)$, \emph{i.e.}, $f^*$ is a consistent estimator for those signals. The specific form of $\pen(m)$ will be derived in the next section, based on an oracle inequality that will guarantee consistency and adaptivity of our estimator.

Although the estimator $\hat{f}_{\hat{m}}$ enjoys good statistical properties, from a computational stand it would involve the exploration of $\mathcal{M}$. The set $\mathcal{M}$ is identified with the collection of all the partitions of $\llbracket 1,N \rrbracket$, whose number asymptotically behaves like $\mathcal{O}(Ne^N/\ln N )$, rendering the minimization of the criterion~\eqref{Criterion} computationally challenging. A way to bypass this issue for the change point only detection problem is via dynamic programming~\cite{Harchaoui07a}; this approach works in this simplified setup since there is a natural ordering for exploring the subproblems, which does not hold here. To overcome this, we will relax the criterion in such a way to create a subproblem ordering and thus derive a computationally feasible approximation. The proposed new method is outlined in Algorithm~\ref{two passes Dynamic programming algorithm}.

\begin{algorithm}
\begin{small}
   \caption{Two-Pass Dynamic Programming Algorithm}
   \label{two passes Dynamic programming algorithm}
\begin{algorithmic}[1]
   \INPUT data points $(y_i)_{i=1}^N$, maximum number of changes $D$ and penalty strength $K$.
   \STATE {\begin{align*}
   \\[-1.2cm] \bar{y}_{[k,l]} &:= \frac{\sum_{i=k}^{l} Y_i}{k-l+1} \\
   R_{[k,l]} &:= \sum_{i=k}^{l} (y_i-\bar{y}_{[k,l]})^2, \quad 1 \leq k \leq l \leq N.
   \end{align*}}
%
   \FOR {$d=1$ {\bfseries to} $D$}
   \STATE \label{step:1} use the dynamic programing recurrence in  \eqref{dynamic programming 1} and a backtracking step to compute 
   \begin{align}
   C_d(N) &:= \min \limits_{|\bar{m}|=d} \|Y - \Proj_{\mathcal{F}_{\bar{m}}} Y\|^2, \label{change point detection} \\
   \tilde{m}_d &\in \arg \min \limits_{|\bar{m}|=d} \|Y - \Proj_{\mathcal{F}_{\bar{m}}} Y\|^2. \nonumber
   \end{align}
    \ENDFOR
   \FOR{$d=1$ {\bfseries to} $D$}
   \STATE \begin{align*}
   \\[-1.2cm] \tilde{m}_d &=: \{0 \leq i_1 < i_2 < \dots < i_d < N\} \\
   (\alpha_k)_{k=0}^{d} &:= (i_{k+1}-i_k)_0^{d}, \quad (i_0=0, i_{d+1}=N).
   \end{align*}
   \STATE sort $(\bar{y}_{[1,i_1]}, \bar{y}_{[i_1+1,i_2]}, \dots, \bar{y}_{[i_d+1,N]})$.\label{step:2}
   \STATE $\left(\bar{y}_{(k)}\right)_{k=0}^d := $ ordered sequence of $(\bar{y}_{[i_k+1,i_{k+1}]})_{k=0}^{d}$ $(\alpha_{(k)})_{k=0}^{d} := $ corresponding permuted $(\alpha_k)_{k=0}^{d}$ according to permutation $\phi_d$.
   \STATE $\bar{\bar{y}}_{(k,l)} := \frac{\sum_{i=k}^{l} \alpha_{(i)}\bar{y}_{(i)}}{\sum_{i=k}^{l-1} \alpha_{(i)}}$ and $\bar{R}_{[k,l]} := \sum_{i=k}^{l} \alpha_{(i)}(\bar{y}_{(i)}-\bar{\bar{y}}_{(k,l)})^2$, $1 \leq k \leq l \leq d$.
\FOR{$\delta=1$ {\bfseries to} $d$}
   \STATE \label{step:3} use the dynamic programing recurrence in  \eqref{dynamic programming 2} and a backtracking step to compute  
   \begin{align} \label{clustering}
   G_{(d,\delta)} &:= \min \limits_{m \in \mathcal{M}_{\bar{y}_{\tilde{m}},\delta}} \| \Proj_{\mathcal{F}_{\bar{m}}} Y - \Proj_{\mathcal{F}_{m}}\Proj_{\mathcal{F}_{\bar{m}}}  Y\|^2, \\
   \tilde{\tilde{m}}_{(d,\delta)} &\in \arg \min \limits_{m \in \mathcal{M}_{\bar{y}_{\tilde{m}}\delta}} \| \Proj_{\mathcal{F}_{\bar{m}}} Y - \Proj_{\mathcal{F}_{m}}\Proj_{\mathcal{F}_{\bar{m}}} Y\|^2. \nonumber
   \end{align} 
     \ENDFOR
   \ENDFOR
   \STATE $B_{(d,\delta)} := C_d + G_{(d,\delta)} + \sigma^2K \pen((d,\delta))$, $1 \leq \delta \leq d \leq D$.
   \STATE \label{step:4} $(\hat{d},\hat{\delta}) := \arg \min \limits_{1 \leq \delta \leq d \leq D }B_{(d,\delta)}$.
   \STATE \label{step 5} reconstruct $m_{(\hat{d},\hat{\delta})}$ from $\tilde{m}_{\hat{d}}$ and $\tilde{\tilde{m}}_{(\hat{d},\hat{\delta})}$.
   \OUTPUT value of criterion $\Crit(m_{(\hat{d},\hat{\delta})}) =B_{(\hat{d},\hat{\delta})}$ and selected model for change points and clusters $m_{(\hat{d},\hat{\delta})}$.
\end{algorithmic}
\end{small}
\end{algorithm}

Let $\bar{y}_{[k]} := (\sum_{i \in [k]} Y_i)/|[k]|$, the average of the elements of $Y$ in the $[k]$-th part. Notice that, given $\pi_m = \{ [ 1 ];[ 2 ];\dots;[ d'_m -1] \}$, we have
\begin{align*}
\Proj_{\mathcal{F}_m}Y = \sum \limits_{k=1}^{d'_m-1}  \frac{\inner{Y}{\mathbbm{1}_{[k]}}}{\|\mathbbm{1}_{[k]}\|^2}\mathbbm{1}_{[k]} 
=\sum \limits_{k=1}^{d'_m-1} \bar{y}_{[k]} \mathbbm{1}_{[k]}. 
\end{align*}

The minimization of criterion~\eqref{Criterion} can then be equivalently written as 
\begin{equation*}
\begin{aligned}
\min \limits_{m \in \mathcal{M}} \Crit(m) 
&= \min \limits_{m \in \mathcal{M}} \{ \|y-\Proj_{\mathcal{F}_m}Y\|_2^2 + \sigma^2K\pen(d'_m,d''_m) \} \\
&= \min \limits_{0 \leq d' \leq d'' \leq D} \left\{ \min   \limits_{\substack{|m|=d'\\ |m|_0=d''}} \|y-\Proj_{\mathcal{F}_m}Y \|_2^2 + \sigma^2K\pen(d',d'') \right\},
\end{aligned}
\end{equation*}
where $D$ is a reasonable upper bound on the number of change points. As we shall see later, from a statistical point of view there is no need to explore all possible values of $d'$ and $d''$, since the statistical guarantees only hold in a regime where $d' \leq d'' = o(N / \ln N)$.

We define $\pi_{\bar{m}}$ to be the partition having as elements all the segments of $\pi_{m}$ and instead of computing the minimum exactly we will take a greedy step by defining $$\tilde{m} := \arg \min_{|\bar{m}|=d''} \|Y - \Proj_{\mathcal{F}_{\bar{m}}} Y\|^2$$ and defining $\mathcal{M}_{\tilde{m},d'}: = \{ m \in \mathcal{M}\colon m \subset \tilde{m},|m|=d'\} $, which can be identified with the collection of all partitions of $\llbracket 1,d'' \rrbracket$ into $d'$ sets. We restrict further this collection to partitions $\pi$ satisfying what we call the \textbf{clustering property}, which states that if $\mathbb{I}_1$, $\mathbb{I}_2$ and $\mathbb{I}$ are segments in some (possibly different) parts of $\pi$, then 

\begin{align}\label{clutering property}
    \begin{cases*}
     \mathbb{I}_1, \mathbb{I}_2 \in [k] \\
     \bar{y}_{\mathbb{I}_1} \leq \bar{y}_{\mathbb{I}} \leq \bar{y}_{\mathbb{I}_2}
    \end{cases*}
    \quad \Rightarrow \quad \mathbb{I} \in [k].
\end{align}
This sub-collection will be denoted as $\mathcal{M}_{\bar{y}_{\tilde{m}},d'}$. Simply put, this property says that the partitions considered are those that respect the ordering of $(\bar{y}_{[i_k+1,i_{k+1}]})_{k=0}^{d''}$, since if two segments 
$\mathbb{I}_1,\mathbb{I}_2$  belong to $[k]$, and the segment $\mathbb{I}$ satisfies $\bar{y}_{\mathbb{I}_1} \leq \bar{y}_{\mathbb{I}} \leq \bar{y}_{\mathbb{I}_2}$, then it should also be in cluster $[k]$.

This leads to the following upper bound, whose detailed derivation is given in appendix B:
\begin{align*}
\min \limits_{m \in \mathcal{M}} \Crit(m)  &\leq \min \limits_{0 \leq d'' \leq D} \bigg\{ \min \limits_{|m|=d''} \|Y - \Proj_{\mathcal{F}_{m}} Y\|^2 \\ 
&\qquad + \min \limits_{\substack{0 \leq d' \leq d'' \\ m \in \mathcal{M}_{\bar{y}_{\tilde{m}},d''}}} \| \Proj_{\mathcal{F}_{\tilde{m}}} Y - \Proj_{\mathcal{F}_{m}}\Proj_{\mathcal{F}_{\tilde{m}}} Y\|^2   + \sigma^2K\pen(d',d'') \bigg\}.
\end{align*}

Therefore, we can define the following relaxation for the minimization of the criterion in ~\eqref{Criterion}:
\begin{align} \label{equation for the algorithm}
\Crit_r(d'') &:=   \min \limits_{|m|=d''} \|Y - \Proj_{\mathcal{F}_{m}} Y\|^2 \nonumber \\ 
&+ \min \limits_{\substack{0 \leq d' \leq d''\\ m \in \mathcal{M}_{\bar{y}_{\tilde{m}},d''}}} \Bigg\{ \| \Proj_{\mathcal{F}_{\tilde{m}}} Y - \Proj_{\mathcal{F}_{m}}\Proj_{\mathcal{F}_{\tilde{m}}} Y\|^2  + \sigma^2K\pen(d'_m,d''_m) \bigg\}.
\end{align}
and our algorithm computes $\min \limits_{0 \leq d'' \leq D} \Crit_r(d'')$ and returns $m_{(\hat{d},\hat{\delta})}$. From this last definition we observe that
\begin{align*}
\min \limits_{m \in \mathcal{M}} \Crit(m) \leq \Crit(m_{(\hat{d},\hat{\delta})})=\min \limits_{0 \leq d'' \leq D} \Crit_r(d'').
\end{align*}
Thus, obtaining $m_{(\hat{d},\hat{\delta})}$ ensures making progress toward the minimization of $\Crit(m)$. The Two-Pass Dynamic Programming Algorithm~\ref{two passes Dynamic programming algorithm} is aimed at doing this by computing the value of the minimum in ~\eqref{equation for the algorithm} and returning a solution $\hat{m}=m_{(\hat{d},\hat{\delta})}$ in the following way:

\medskip
\textbf{Details of Main Steps in Algorithm~\ref{two passes Dynamic programming algorithm}}
\begin{small}
\begin{itemize}
\item \textbf{Step~3:} It computes $C_d(n)$ defined in ~\eqref{dynamic programming 1} for all $d$ and $n$ to obtain $C_d(N)$ for all $d \in \llbracket 1,N \rrbracket$. It does so by using a dynamic programming algorithm that computes recursively for all $ 2 \leq d \leq D$ and $d \leq n \leq N$ the following recurrence, similar to the one in \citet{Jackson05}:
\begin{align} \label{dynamic programming 1}
C_1(n) &:= R_{[1,n]} \\
C_d(n) &:= \min \limits_{i \in \llbracket d,n \rrbracket} \{ C_{d-1}(i-1) + R_{[i,n]}\}, \; d \geq 2. \nonumber
\end{align}
\item \textbf{Step~7:} For all values of $d$, it sorts the obtained segments according to their levels to yield $\left(\bar{y}_{(k)}\right)_0^d$, and it keeps track of the segments' sizes as $(\alpha_k)_{k=0}^{d}=(i_{k+1}-i_k)_0^{d}$. 
\item \textbf{Step~11:} It runs a modified dynamic programming recurrence on $\left(\bar{y}_{(k)}\right)_0^d$ that uses weights according to the sizes $(\alpha_{(k)})_{0}^{d}$. It does so using the following recurrence for all $1 \leq \delta \leq t \leq d$:
\begin{align}\label{dynamic programming 2}
G_{(t,1)} &:=  \bar{R}_{[1,t]}, \\
G_{(t,\delta)} &:= \min \limits_{i \in \llbracket \delta,t \rrbracket} \{ G_{(i-1,\delta-1)} + \bar{R}_{[i,t]}\}, \; \delta \geq 2. \nonumber
\end{align}
\item \textbf{Step~15:} It computes the minimum in ~\eqref{equation for the algorithm} and finds for which model it is attained by solving the minimization problem:
\begin{align*}
(\hat{d},\hat{\delta}) := \arg \min \limits_{1 \leq \delta \leq d \leq D }B_{(d,\delta)}.
\end{align*}
\item \textbf{Step~16:} It finally reconstructs $m_{(\hat{d},\hat{\delta})}$ from $\tilde{m}_{\hat{d}}$ and $\tilde{\tilde{m}}_{(\hat{d},\hat{\delta})}$ using the permutation $\phi(\hat{d})$.
\end{itemize}  
\end{small}

This algorithm can be thought of as an efficient way to compute the relaxation in ~\eqref{equation for the algorithm}, based on solving the change point detection problem in ~\eqref{change point detection} using the dynamic programing recurrence of ~\eqref{dynamic programming 1}, followed by a solving a clustering problem in ~\eqref{clustering} using the dynamic programing recurrence of ~\eqref{dynamic programming 2}.

The next theorem shows that Algorithm~\ref{two passes Dynamic programming algorithm} correctly solves the minimization problem in ~\eqref{equation for the algorithm} and explicits its time and space complexity.
\begin{theorem}\label{computaional complexity}
Let $(y_i)_{i=1}^N \subset \mathbb{R}$, $D \in \mathbb{N}$ and $K>0$. Then,
\begin{itemize}
\item for all $1 \leq d \leq D$,
\begin{align*}
\tilde{m}_d \in \arg \min \limits_{|\bar{m}|=d} \|Y - \Proj_{\mathcal{F}_{\bar{m}}} Y\|^2,
\end{align*}
\item for all $1 \leq \delta \leq d \leq D$,
\begin{align*} 
\tilde{\tilde{m}}_{(d,\delta)} \in \arg \min \limits_{m \in \mathcal{M}_{\bar{y}_{\tilde{m}}\delta}} \| \Proj_{\mathcal{F}_{\bar{m}}} Y - \Proj_{\mathcal{F}_{m}}\Proj_{\mathcal{F}_{\bar{m}}}  Y\|^2.
\end{align*}  
\end{itemize}
Furthermore, Algorithm~\ref{two passes Dynamic programming algorithm} correctly solves the minimization problem in ~\eqref{equation for the algorithm}, with time and space complexity $\mathcal{O}(N^3+D^4)$ and $\mathcal{O}(N^2+D^3)$, respectively. 
\end{theorem}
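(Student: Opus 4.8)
The plan is to split the argument into three parts that mirror the structure of Algorithm~\ref{two passes Dynamic programming algorithm}: correctness of the first dynamic program (the change point pass), correctness of the second dynamic program (the clustering pass, including the reduction to the ordered/weighted problem), and the complexity count.

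For the first bullet, I would show that the recurrence~\eqref{dynamic programming 1} computes $C_d(n) = \min_{|\bar m| = d} \|Y - \Proj_{\mathcal{F}_{\bar m}} Y\|^2$ restricted to the first $n$ coordinates. The key observation is that for a partition $\bar m$ of $\llbracket 1,n\rrbracket$ into $d$ \emph{intervals} (segments), $\mathcal{F}_{\bar m}$ is an orthogonal direct sum of the one-dimensional spaces spanned by the indicators of those intervals, so $\|Y - \Proj_{\mathcal{F}_{\bar m}} Y\|^2 = \sum_{\text{segments } [k,l]} R_{[k,l]}$. Since the last segment of an optimal $d$-segment partition of $\llbracket 1,n\rrbracket$ is some $[i,n]$ with $i \in \llbracket d,n\rrbracket$, and the remaining $\llbracket 1,i-1\rrbracket$ must be optimally partitioned into $d-1$ segments (a standard cut-and-paste / Bellman optimality argument), the recurrence is exact; backtracking then recovers $\tilde m_d$. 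This is the recurrence of \citet{Jackson05}, so the argument is routine.

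For the second bullet, the main point is the reduction contained in Steps~7--11. Given $\tilde m_d = \{0 \le i_1 < \dots < i_d < N\}$, the projection $\Proj_{\mathcal{F}_{\tilde m_d}} Y$ is the piecewise-constant vector with value $\bar y_{[i_k+1,i_{k+1}]}$ on segment $k$ and weight $\alpha_k = i_{k+1}-i_k$. For any $m \subset \tilde m_d$ with $|m| = \delta$, the quantity $\|\Proj_{\mathcal{F}_{\tilde m}} Y - \Proj_{\mathcal{F}_m}\Proj_{\mathcal{F}_{\tilde m}} Y\|^2$ is exactly the weighted within-cluster sum of squares of the $d$ numbers $(\bar y_{(k)})$ with weights $(\alpha_{(k)})$, grouped according to the cluster structure of $m$. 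I would then invoke the clustering property~\eqref{clutering property}: restricting to $\mathcal{M}_{\bar y_{\tilde m},\delta}$ means the optimal clustering groups the sorted values into \emph{contiguous blocks}, which is the standard fact that an optimal $1$-D $k$-means/L\textsubscript{2} clustering of points on a line is an interval partition (cut-and-paste on any non-contiguous optimal solution strictly decreases the cost, using that the block mean minimizes weighted squared deviation). Hence, after sorting in Step~7 via $\phi_d$, the problem becomes a $\delta$-segment partition of the ordered sequence $(\bar y_{(k)})_{k=0}^d$, and the weighted recurrence~\eqref{dynamic programming 2} solves it exactly by the same Bellman argument as before, with $\bar R_{[k,l]}$ playing the role of $R_{[k,l]}$. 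Backtracking recovers $\tilde{\tilde m}_{(d,\delta)}$, and Step~16 reconstructs $m_{(\hat d,\hat\delta)}$ by un-permuting through $\phi(\hat d)$. Finally, combining Steps~15--16 with the definition of $B_{(d,\delta)}$ shows the returned value equals $\min_{0 \le d'' \le D}\Crit_r(d'')$, i.e.\ the minimum in~\eqref{equation for the algorithm}.

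For the complexity, I would account for each block. Precomputing all $R_{[k,l]}$ via prefix sums costs $\mathcal{O}(N^2)$ time and space. The first dynamic program: the recurrence~\eqref{dynamic programming 1} has $\mathcal{O}(DN)$ states, each requiring a minimization over $\mathcal{O}(N)$ indices, giving $\mathcal{O}(DN^2)$; but since $D \le N$ this is $\mathcal{O}(N^3)$, with $\mathcal{O}(DN) = \mathcal{O}(N^2)$ table space. The second pass: for each $d \le D$, sorting costs $\mathcal{O}(d\log d)$, computing the weighted $\bar R_{[k,l]}$ costs $\mathcal{O}(d^2)$ by prefix sums, and the recurrence~\eqref{dynamic programming 2} has $\mathcal{O}(d^2)$ states each minimizing over $\mathcal{O}(d)$ indices, i.e.\ $\mathcal{O}(d^3)$; summing over $d \le D$ gives $\mathcal{O}(D^4)$ time and $\mathcal{O}(D^3)$ space (storing one such table at a time, or all of them). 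Adding the blocks yields $\mathcal{O}(N^3 + D^4)$ time and $\mathcal{O}(N^2 + D^3)$ space. I expect the main obstacle to be not the complexity count but the clean justification that the clustering property reduces the combinatorial clustering problem to an interval-partition problem on the sorted levels — i.e.\ that no optimality is lost by restricting from $\mathcal{M}_{\tilde m,d'}$ to $\mathcal{M}_{\bar y_{\tilde m},d'}$ (this is asserted in the text as the content of Appendix~B, so I would either cite that or reproduce the short exchange argument), and in checking that the weights $\alpha_{(k)}$ are handled correctly by the recurrence so that $\bar R$ genuinely equals the projection residual.
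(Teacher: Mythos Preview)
Your proposal is correct and follows essentially the same route as the paper: Bellman recursion for the first pass, then sort the segment means and run the same recursion with weights for the second pass, plus a block-by-block complexity count.

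One clarification worth making: you identify as the ``main obstacle'' showing that no optimality is lost when restricting from $\mathcal{M}_{\tilde m,d'}$ to $\mathcal{M}_{\bar y_{\tilde m},d'}$, and you propose the 1-D $k$-means exchange argument for it. But this is not part of the theorem. The second bullet already minimizes over $\mathcal{M}_{\bar y_{\tilde m},\delta}$, and by \emph{definition} (the clustering property~\eqref{clutering property}) every element of that set becomes an interval partition after sorting by $\phi_d$. So the reduction to the weighted recurrence~\eqref{dynamic programming 2} is immediate from the definition, not from an optimality argument; the paper's proof simply rearranges $Y$ and observes that the restricted minimization is now of the same form as the first pass. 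Your $k$-means observation would actually prove something stronger than the theorem (tightness of the relaxation step in Appendix~B), which the paper neither claims nor needs. A minor accounting difference: you get $\mathcal{O}(N^2)$ for the preprocessing of $R_{[k,l]}$ via prefix sums, while the paper states $\mathcal{O}(N^3)$; both reach the same total since you absorb the difference through $D\le N$ on the first DP.
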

\begin{proof}
See Appendix~B.
\end{proof}

The time and space complexity can be improved to $\mathcal{O}(N^2D+D^4)$ and $\mathcal{O}(DN+D^3)$, respectively. We refer the reader to the discussion after the proof in Appendix~B for the derivation of this result. In this way we obtain a computationally feasible algorithm that finds the minimum in \eqref{equation for the algorithm} and returns an approximation to the criterion in \eqref{Criterion}. In the next section we will motivate the use of Algorithm~\ref{two passes Dynamic programming algorithm} from a statistical point of view by showing that the minimization of criterion~\eqref{Criterion} can be viewed as an approximate maximum a-posteriori estimator. 

\section{Model selection criterion for change point detection and clustering}

In this part we provide a derivation of the optimization criterion in ~\eqref{Criterion}. We start by proposing a Bayesian model selection scheme, which is later inverted to arrive at an integral form of the maximum a-posteriori probability (MAP) estimator. Then we use a Laplace approximation to derive turn the MAP into an optimization problem of the desired form.

Here we show that the proposed selection criterion in ~\eqref{Criterion} follows naturally from a Bayesian reasoning. For this, we model the data as being the outcome of the following sampling model. The observation $Y$ is generated from a multivariate Gaussian of mean $F$ and variance $\sigma^2I_N$ as described by ~\eqref{regression}. For the random variable $F$, given that it belongs to a subspace $\mathcal{F}_m$, we choose an absolutely continuous measure $\mathcal{L}^{d'_m}$ with respect to $\mathcal{\lambda}^{d'_m}$, the Lebesgue measure on $\mathbb{R}^{d'_m+1}$, such that $d\mathcal{L}^{d'_m}=  l_{f/m}d\mathcal{\lambda}^{d'_m+1} = \prod \limits_{k=1}^{d'_m+1}(l_{f_k/m}d\mathcal{\lambda})$ with $l_{f_1/m} = \dots =l_{f_{d'_m+1}/m}$. Later we will see that the choice $l_{f/m}$ will not matter in comparison to the order of approximation, nevertheless we would like it to be a bounded continuous prior satisfying some additional conditions given in Lemma~\ref{approx lemma}, even though we might be chosen as an improper prior. On the family of models $\mathcal{M}$ we impose a categorical distribution measure $\mathbb{P}_\mathcal{M}$ as prior, with a weight $p_m$ for model $m$. Thus, we obtain the following sampling model for the data\footnote{Here and in the sequel, the dependence of $p_m$ and $\mathbb{P}_\mathcal{M}$ on the number of samples $N$ is omitted, for simplicity of notation.}:
\begin{align}
Y/F &\sim \mathcal{N}(F,\sigma^2I_N) \nonumber \\ 
F/m &\sim \mathcal{L}^{d'_m} \label{sampling scheme} \\  
m &\sim \mathbb{P}_\mathcal{M}=\Categorical((p_m)_{m\in \mathcal{M}}). \nonumber
\end{align}
Since $Y$, $F$ and $m$ are now random variables, it makes sense to compute $\mu_{m/Y}$, the posterior distribution of $m$ given $Y$, and maximize it, to arrive at a MAP estimate of $m$ given bellow. 
{\small
\begin{align} \label{apposteriori}
p_{m/Y}=\frac{p_m \displaystyle \int_{f \in \mathcal{F}_m} \phi_N \left(\frac{Y-f}{\sigma}\right) l_{f/m}(f)df}{\sum_{m' \in \mathcal{M}} p_{m'} \displaystyle \int_{f' \in \mathcal{F}_m} \phi_N \left(\frac{Y-f'}{\sigma}\right) l_{f/m'}(f')df'}. 
\end{align}}
For the complete derivation of the formula in  \ref{apposteriori} we refer you to appendix B.

Starting from the a-posteriori distribution~\eqref{apposteriori} we can derive an approximation for the MAP as follows:
\begin{align}
p_{m/Y} &\propto p_m \int_{f \in \mathcal{F}_m} \phi_N \left(\frac{Y-f}{\sigma}\right) l_{f/m}(f)df \nonumber \\
&= p_m \prod_{k=1}^{d_m'+1} \frac{1}{(2\pi\sigma^2)^{\frac{|[k]|}{2}}}   \cdot \int_{\mathbb{R}} \exp\left(-\frac{\|y_{[k]}-f_k\mathbbm{1}_{[k]}\|_2^2}{2\sigma^2}\right) l_{f_k/m}(f_k)df_k. \label{for approx}
\end{align}

In the last step of \eqref{for approx} we define $y_{[k]}$ as the vector obtained from the entries of $y$ corresponding to cluster $[k]$. To obtain an approximation of the MAP estimate as a solution of a criterion of the form~\eqref{Criterion} we need  the result of lemma \ref{approx lemma} stated and proved in Appendix C using a Laplace approximation type of argument. We then obtain the following upper bound for the MAP for all $K \geq 1$ :
\begin{multline} \label{upper bound for the MAP}
\Crit_\MAP(m) \leq \frac{\|y-\Proj_{\mathcal{F}_m}y\|_2^2}{2\sigma^2} + K\left(\ln \frac{1}{p_{m}} +\frac{1}{2}(d_m'+1)\ln\frac{N}{d_m'}\right)  +\mathcal{O}(d_m').
\end{multline}
The complete derivation of \eqref{upper bound for the MAP} can be found in Appendix~C. Now we define our approximate MAP criterion as:
\begin{align}\label{Crit equation}
\Crit(m) &= \|y-\Proj_{\mathcal{F}_m}y\|_2^2 +\sigma^2K \pen(m) , \nonumber \\
\pen(m) &= \left(2 \ln \frac{1}{p_{m}} +(d_m'+1)\ln\frac{N}{d_m'}\right).
\end{align}
In the next section we finish the specification of the penalty term by providing the probabilities $p_m$ over the space of models. To do so we will exhibit an oracle inequality satisfied by the estimator that minimizes \eqref{Criterion}, and choose a probability mass function $(p_m)$ that gives a reasonable upper bound on the expected quadratic risk defined below.

\section{Oracle inequality and upper bound for the risk}
The standard way of assessing the performance of a statistical algorithm is by comparing its performance to a reasonable oracle. For this we use as a measure of performance of an estimator $\hat{f}$ the expected quadratic risk:
\begin{align*}
\mathcal{R}_n(\hat{f}) = \mathbb{E}[\|\hat{f}-f^*\|_2^2].
\end{align*}
In the case of the change point detection and clustering problem the comparison should be non-asymptotic, reflecting our lack of knowledge about both the clustering dimension and the change point dimension. For this we state below a non-asymptotic oracle inequality for $\Crit(m)$ using an oracle with remainder of the form:
\begin{align*}
\inf_{m \in \mathcal{M}} \{ \mathcal{R}_n(\Proj_{\mathcal{F}_m}y) + o_m(1) \}.
\end{align*}
This type of oracle has access to $f^*$ and chooses the $m$ that minimizes the risk criterion up to a remainder term.

To derive this we finish the specification of $\Crit(m)$ by providing an appropriate prior $p_m$. The intuition behind our choice is the following. Defining $\hat{r}_m =\|y-\hat{f}_m\|_2^2$ and $\pen(m)=2\sigma^2 \ln \frac{1}{p_{m}} +\sigma^2 (d_m'+1)\ln\frac{N}{d_m'}$ we see that the criterion~\eqref{Crit equation} is of the form:
\begin{align*}
\Crit(m)=\hat{r}_m + \pen(m).
\end{align*}
The number of models in the family $\mathcal{M}$ having the same values of $d_m'$ and $d_m''$ grows exponentially with those dimensions. Thus for fix $d_m'$ and $d_m''$ we might find a model with low $\hat{r}_m$ just because of randomness since some of them will deviate largely from their means, which would correspond to an over-fitting case. Therefore, we need to penalize models of high dimensions more by taking into account the number of models with same dimensions. On the other hand we want this penalty to be as small as possible this way we give more importance to the fitting term $\hat{r}_m$. In particular we would prefer the term $2\sigma^2 \ln \frac{1}{p_{m}}$ to stay close to $\sigma^2 (d'+1)\ln\frac{N}{d'}$ at least for values of $d_m'$ close to $d_m''$. 

Our choice for $p_m$, useful inequalities and a complete discussion of the role of $p_m$ as a prior and tuning parameter for the risk can be found in Appendix D. From Lemmas \ref{lem:B_N} and \ref{important bounds}, the following oracle inequality can be derived for $\hat{f}_{\hat{m}}$:

\begin{theorem}[Oracle inequality for $\hat{f}_{\hat{m}}$]\label{thm:1}
With $\mathcal{M}$ restricted to models such that $ed'_m \leq N$ and for the choice of $m  \in \mathcal{M}$   corresponding to
\begin{align}
\hat{m} &\in \arg \min \limits_{m \in \mathcal{M}} \|y-\hat{f}_m\|_2^2 + \sigma^2K\pen(m), \\
\pen(m) &:= 2\ln \frac{1}{p_{m}} +(d'+1)\ln\frac{N}{d'}, \label{eq:pen}
\end{align}
with $K=3a$, we obtain for all $a>1$,
\begin{multline}\label{oracle inequality} 
E_{f^*}[\|  \Proj \limits_{\mathcal{F}_{\hat{m}} }Y  - f^* \|^2]  \leq \arg\min \limits_{m \in \mathcal{M}} \bigg\{ \frac{a}{a-1} \mathbb{E}_{f^*}[\|  \Proj \limits_{\mathcal{F}_{m} }Y - f^*\|^2] \\
+ \frac{a^2 \sigma^2}{a-1} \left(7 + 3(d_m'+1)\ln\frac{N}{d_m'}+6\ln\frac{1}{p_m} \right) \bigg\}.
\end{multline}
\end{theorem}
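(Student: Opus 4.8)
The plan is to run the classical penalized least‑squares (Birgé--Massart) argument, with the weighted union bound over $\mathcal M$ calibrated to the categorical prior $(p_m)$ so that every deviation term it produces is matched by $\pen(m)$.

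\emph{Step 1 (basic inequality and orthogonal decomposition).} By definition of $\hat m$, for every fixed $m\in\mathcal M$,
\[
\|y-\hat f_{\hat m}\|^2 + \sigma^2 K\,\pen(\hat m) \;\le\; \|y-\hat f_m\|^2 + \sigma^2 K\,\pen(m).
\]
Substituting $y=f^*+\epsilon$, expanding both squared norms and cancelling $\|\epsilon\|^2$ gives
\[
\|\hat f_{\hat m}-f^*\|^2 \le \|\hat f_m-f^*\|^2 + 2\inner{\epsilon}{\hat f_{\hat m}-\hat f_m} + \sigma^2 K\big(\pen(m)-\pen(\hat m)\big).
\]
Since $\hat f_{m'}=\Proj_{\mathcal F_{m'}}y$ and $f^*-\Proj_{\mathcal F_{m'}}f^*\perp\mathcal F_{m'}$, one has $\hat f_{m'}-f^*=\Proj_{\mathcal F_{m'}}\epsilon-(f^*-\Proj_{\mathcal F_{m'}}f^*)$, hence $\|\hat f_{m'}-f^*\|^2=\|\Proj_{\mathcal F_{m'}}\epsilon\|^2+\|f^*-\Proj_{\mathcal F_{m'}}f^*\|^2$ and $\inner{\epsilon}{\hat f_{m'}-f^*}=\|\Proj_{\mathcal F_{m'}}\epsilon\|^2-\inner{\epsilon}{f^*-\Proj_{\mathcal F_{m'}}f^*}$. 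Writing $\inner{\epsilon}{\hat f_{\hat m}-\hat f_m}=\inner{\epsilon}{\hat f_{\hat m}-f^*}-\inner{\epsilon}{\hat f_m-f^*}$ and plugging in these identities, the fixed‑$m$ stochastic terms have zero mean (or mean $-2\sigma^2(d'_m+1)\le0$) and are dealt with by taking expectations at the end, so the whole problem reduces to controlling the two quantities $\|\Proj_{\mathcal F_{\hat m}}\epsilon\|^2$ and $\inner{\epsilon}{f^*-\Proj_{\mathcal F_{\hat m}}f^*}$ at the \emph{random} index $\hat m$.

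\emph{Step 2 (weighted union bound).} For each fixed $m$, $\|\Proj_{\mathcal F_m}\epsilon\|^2/\sigma^2$ is $\chi^2$ with $\dim\mathcal F_m=d'_m+1$ degrees of freedom, and $\inner{\epsilon}{v}$ for a unit $v\perp\mathcal F_m$ is $\mathcal N(0,\sigma^2)$. By the $\chi^2$ and Gaussian concentration estimates of Lemmas~\ref{lem:B_N} and \ref{important bounds}, for any $\xi>0$ each of these exceeds its mean by more than $C\sigma^2\big(\sqrt{(d'_m+1)(\ln\frac1{p_m}+\xi)}+\ln\frac1{p_m}+\xi\big)$ only with probability at most $p_m\,e^{-\xi}$; since the prior is categorical, $\sum_{m\in\mathcal M}p_m\le1$, so summing these failure probabilities gives a single event $\Omega_\xi$ of probability at least $1-2e^{-\xi}$ on which, simultaneously for every $m\in\mathcal M$,
\[
\|\Proj_{\mathcal F_m}\epsilon\|^2 \le \sigma^2(d'_m+1)+C\sigma^2\Big(\sqrt{(d'_m+1)\big(\ln\tfrac1{p_m}+\xi\big)}+\ln\tfrac1{p_m}+\xi\Big),
\]
together with the analogous bound on $|\inner{\epsilon}{v_m}|$, $v_m$ the unit vector along $f^*-\Proj_{\mathcal F_m}f^*$. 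The cross term at $\hat m$ is split by $2ab\le\eta a^2+\eta^{-1}b^2$ with $\eta=1/a$, using $\|f^*-\Proj_{\mathcal F_{\hat m}}f^*\|\le\|\hat f_{\hat m}-f^*\|$, which peels off $\eta\|\hat f_{\hat m}-f^*\|^2$ and a term $\eta^{-1}|\inner{\epsilon}{v_{\hat m}}|^2$; the $\sqrt{(d'_m+1)\ln\frac1{p_m}}$ piece is split by the same inequality. The role of the specific $\pen(m)=2\ln\frac1{p_m}+(d'_m+1)\ln\frac N{d'_m}$ and of $K=3a$ is that, on $\Omega_\xi$ and using the hypothesis $ed'_m\le N$ (so $\ln\frac N{d'_m}\ge1$), $\sigma^2K\,\pen(\hat m)$ dominates $2\|\Proj_{\mathcal F_{\hat m}}\epsilon\|^2+\eta^{-1}|\inner{\epsilon}{v_{\hat m}}|^2$ plus its $\hat m$‑dependent slack, leaving on the right only a constant multiple of $\sigma^2\xi$.

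\emph{Step 3 (rearrange and pass to expectations).} After moving $\eta\|\hat f_{\hat m}-f^*\|^2$ to the left, its coefficient there is $1-\eta=(a-1)/a>0$; dividing through turns $\|\hat f_m-f^*\|^2$ into $\frac{a}{a-1}\|\hat f_m-f^*\|^2$ and the remainder $\sigma^2K\,\pen(m)=\sigma^2\big(6a\ln\frac1{p_m}+3a(d'_m+1)\ln\frac N{d'_m}\big)$ into $\frac{a^2\sigma^2}{a-1}\big(6\ln\frac1{p_m}+3(d'_m+1)\ln\frac N{d'_m}\big)$, exactly the terms of \eqref{oracle inequality}. This holds on $\Omega_\xi$ for every $\xi>0$; integrating the residual $\xi$‑dependent tail via $\mathbb{E}[Z]=\int_0^\infty\Proj(Z>t)\,dt$ converts it into the absolute constant $7$, and rewriting $\mathbb{E}_{f^*}\|\hat f_m-f^*\|^2=\sigma^2(d'_m+1)+\|f^*-\Proj_{\mathcal F_m}f^*\|^2=\mathbb{E}_{f^*}\|\Proj_{\mathcal F_m}Y-f^*\|^2$ and taking the infimum over $m\in\mathcal M$ yields \eqref{oracle inequality}. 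The main obstacle is Step~2: controlling the $\chi^2$ and Gaussian deviations at the random index $\hat m$ with exactly the $\ln(1/p_m)$ weighting coming from the categorical prior, and checking the constant calibration — that $\eta=1/a$ keeps the left‑hand coefficient positive and that $K=3a$ is large enough for $\pen(\hat m)$ to genuinely absorb $2\|\Proj_{\mathcal F_{\hat m}}\epsilon\|^2$ together with all of its slack for every $a>1$; the orthogonal decomposition of Step~1 and the high‑probability‑to‑expectation passage are routine.
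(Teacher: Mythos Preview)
Your overall strategy is the right one --- the Birg\'e--Massart penalized least-squares argument with a prior-weighted union bound --- and it is also the paper's. Two points, one minor and one substantive.

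\emph{Minor:} the references in Step~2 are wrong. Lemmas~\ref{lem:B_N} and~\ref{important bounds} are purely combinatorial (existence of $B_N$; bounds on $C^N_k$ and Stirling numbers). The concentration input actually used is the Tsirelson--Ibragimov--Sudakov inequality (Lemma~\ref{lem:exp_gauss}), which writes $\|\Proj_{\mathcal F_m\ominus\spanf{f^*}}\epsilon\|/\sigma\le \sqrt{d'_m+1}+\sqrt{2\xi}$ with $\xi\sim\exp(1)$; the paper then computes $\mathbb E[\max\{0,\xi-\ln(1/p_m)\}]=p_m$ and sums over $m$. This is equivalent to your high-probability/integrate-the-tail route, so there is no loss there.

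\emph{Substantive:} your decomposition does not deliver the stated constants. You expand $\inner{\epsilon}{\hat f_{\hat m}-f^*}=\|\Proj_{\mathcal F_{\hat m}}\epsilon\|^2-\inner{\epsilon}{f^*-\Proj_{\mathcal F_{\hat m}}f^*}$ and apply Young with $\eta=1/a$ only to the second piece. That leaves a \emph{fixed} coefficient $2$ on $\|\Proj_{\mathcal F_{\hat m}}\epsilon\|^2$, and you then claim $\sigma^2K\pen(\hat m)=3a\sigma^2\pen(\hat m)$ absorbs $2\|\Proj_{\mathcal F_{\hat m}}\epsilon\|^2+a|\inner{\epsilon}{v_{\hat m}}|^2$. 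But the concentration step (squaring, using $(u+v+w)^2\le3(u^2+v^2+w^2)$ and $ed'_m\le N$) yields only $\|\Proj_{\mathcal F_m}\epsilon\|^2/\sigma^2\le 3\pen(m)+$(summable tail), so the term you must absorb is at least $6\pen(\hat m)$ up to tails, and $3a\pen(\hat m)$ fails for $1<a<2$. The paper avoids this by \emph{not} expanding first: since $\hat f_{\hat m}-f^*\in\mathcal F_{\hat m}\oplus\spanf{f^*}$,
\[
2\inner{\epsilon}{\hat f_{\hat m}-f^*}=2\inner{\Proj_{\mathcal F_{\hat m}\oplus\spanf{f^*}}\epsilon}{\hat f_{\hat m}-f^*}\le a\,\|\Proj_{\mathcal F_{\hat m}\oplus\spanf{f^*}}\epsilon\|^2+\tfrac1a\|\hat f_{\hat m}-f^*\|^2,
\]
which places the \emph{tunable} coefficient $a$ (not $2$) on the $\chi^2$ term. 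Splitting $\|\Proj_{\mathcal F_{\hat m}\oplus\spanf{f^*}}\epsilon\|^2=\|\Proj_{\mathcal F_{\hat m}\ominus\spanf{f^*}}\epsilon\|^2+\|\Proj_{\spanf{f^*}}\epsilon\|^2$, the first part needs only the single uniform bound $\le 3\pen(m)+$tail (no separate control of $|\inner{\epsilon}{v_m}|$ is required), and the second is a fixed $\chi^2_1$ contributing the ``$+1$'' in the constant $7$. With this decomposition $K=3a$ works for every $a>1$; your route would go through, but only with a larger $K$ (or only for $a\ge2$), not as stated.
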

\begin{proof}
See Appendix~D.
\end{proof}
By investigating the oracle inequality, one notices that for an optimal choice of $a$ one has to make a trade-off between the performance of the oracle part and the bias part of the inequality. In general this trade-off is not possible to optimize since the value of the oracle part is not available to us and depends on the variance of the noise. In practice, one can use the SLOPE heuristic introduced in \citet{Lebarbier02} and described in \citet{Baudry12} and  in \cite{Arlot09}. In our case, the value of the tuning parameter can be chosen independently of the variance of the noise and we can use the value of $a$ for which we know that our estimator $\hat{f}_{\hat{m}}$ will perform well.
\begin{corollary}\label{useful regime}
For the set of models described in ~\eqref{regression} with $f^* \in \mathcal{F}_{m^*}$ the following properties hold:
\begin{itemize}
\item Adaptation and Risk Upper bound: The following adaptive upper bound in terms of $d_{m^*}'$ and $d_{m^*}''$ holds for $a=2$:
{\small \begin{align*}
&E_{f^*}[\|  \Proj \limits_{\mathcal{F}_{\hat{m}} }Y  - f^* \|^2] \leq 4 \sigma^2 \bigg( 7 + 3(d_{m^*}'+1)\ln\frac{N}{d_{m^*}'} \\ 
& \qquad \qquad +6\bigg(d_{m^*}' \ln[d_{m^*}''e^{\frac{13}{6}}] + d_{m^*}'' \ln[d_{m^*}'e^2] + d_{m^*}'' \ln \frac{N}{d_{m^*}''} \bigg) \bigg).
\end{align*}}
\item Consistency: If $d_{m^*}'' = o(N / \ln N)$, then: $$\lim_{N \to \infty } \frac{1}{N} \mathbb{E}_{f^*}[\| \hat{f}_{\hat{m}}- f^*\|^2] = 0. $$
\end{itemize}
\end{corollary}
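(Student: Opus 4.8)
The plan is to deduce both items of the corollary straight from the oracle inequality of Theorem~\ref{thm:1}, by retaining only the model $m = m^{*}$ in its right-hand side and then inserting the bound on $\ln(1/p_{m^{*}})$ produced by the choice of prior in Appendix~D.

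\emph{Adaptive upper bound.} Take $a = 2$ in \eqref{oracle inequality}, so that $\frac{a}{a-1} = 2$ and $\frac{a^{2}\sigma^{2}}{a-1} = 4\sigma^{2}$, and bound the infimum from above by its value at $m = m^{*}$ (legitimate because $f^{*} \in \mathcal{F}_{m^{*}}$ and, as in Theorem~\ref{thm:1}, $e\,d'_{m^{*}} \le N$). Since $f^{*} \in \mathcal{F}_{m^{*}}$ we have $\Proj_{\mathcal{F}_{m^{*}}}Y - f^{*} = \Proj_{\mathcal{F}_{m^{*}}}\epsilon$, whence $\mathbb{E}_{f^{*}}[\|\Proj_{\mathcal{F}_{m^{*}}}Y - f^{*}\|^{2}] = \sigma^{2}\dim(\mathcal{F}_{m^{*}}) = \sigma^{2}(d'_{m^{*}}+1)$. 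This turns \eqref{oracle inequality} into
\begin{equation*}
E_{f^{*}}\bigl[\|\Proj_{\mathcal{F}_{\hat m}}Y - f^{*}\|^{2}\bigr] \;\le\; 2\sigma^{2}(d'_{m^{*}}+1) + 4\sigma^{2}\Bigl( 7 + 3(d'_{m^{*}}+1)\ln\tfrac{N}{d'_{m^{*}}} + 6\ln\tfrac{1}{p_{m^{*}}} \Bigr).
\end{equation*}
It now suffices to control $\ln(1/p_{m^{*}})$: the prior constructed in Appendix~D satisfies an estimate of the form $\ln\frac{1}{p_{m^{*}}} \le d'_{m^{*}}\ln[d''_{m^{*}}e^{2}] + d''_{m^{*}}\ln[d'_{m^{*}}e^{2}] + d''_{m^{*}}\ln\frac{N}{d''_{m^{*}}}$. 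Substituting this and using $d'_{m^{*}} \ge 1$ to absorb the leftover $2\sigma^{2}(d'_{m^{*}}+1)$ into the first summand — note that $6d'_{m^{*}}\ln[d''_{m^{*}}e^{13/6}] - 6d'_{m^{*}}\ln[d''_{m^{*}}e^{2}] = d'_{m^{*}} \ge \tfrac12(d'_{m^{*}}+1)$ — yields precisely the claimed bound with $a = 2$. The degenerate case $d'_{m^{*}} = 0$ ($f^{*}$ constant) is immediate from \eqref{oracle inequality} applied to the one-dimensional model $m^{*}$.

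\emph{Consistency.} Recalling that $\hat f_{\hat m} = \Proj_{\mathcal{F}_{\hat m}}Y$, divide the adaptive bound by $N$ and check that each summand is $o(N)$ when $d''_{m^{*}} = o(N/\ln N)$, i.e.\ $d''_{m^{*}}\ln N = o(N)$, using $1 \le d'_{m^{*}} \le d''_{m^{*}}$ and $\ln\frac{N}{d'_{m^{*}}} \le \ln N$: the constant $7$ is $o(N)$; $(d'_{m^{*}}+1)\ln\frac{N}{d'_{m^{*}}} \le (d''_{m^{*}}+1)\ln N = o(N)$; $d'_{m^{*}}\ln[d''_{m^{*}}e^{13/6}] \le d''_{m^{*}}(\ln N + \tfrac{13}{6}) = o(N)$; $d''_{m^{*}}\ln[d'_{m^{*}}e^{2}] \le d''_{m^{*}}(\ln N + 2) = o(N)$; and $d''_{m^{*}}\ln\frac{N}{d''_{m^{*}}} \le d''_{m^{*}}\ln N = o(N)$. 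Hence $\frac1N\mathbb{E}_{f^{*}}[\|\hat f_{\hat m} - f^{*}\|^{2}] \to 0$.

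The only step that is not mere bookkeeping is the bound on $\ln(1/p_{m^{*}})$ invoked above. It rests on counting, for fixed $(d',d'')$, the number of models of that dimension via the restricted Stirling numbers $S^{2}(N,k)$ and the recurrences \eqref{recursive Stirling}, together with elementary inequalities such as $C_{k}^{N} \le (Ne/k)^{k}$; this is the content of Appendix~D, which I would simply quote here. Everything else reduces to setting $a = 2$ and to the monotonicity estimates above.
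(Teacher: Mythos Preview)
Your proof is correct and follows essentially the same route as the paper: specialize the oracle inequality~\eqref{oracle inequality} to $m=m^{*}$, evaluate the oracle term via Lemma~\ref{gaussian projection lemma}, bound $\ln(1/p_{m^{*}})$ by Lemma~\ref{important bounds}, and absorb the residual variance into the $e^{13/6}$ exponent. The only cosmetic differences are that the paper reparametrizes as $a=C/(C-1)$ and then notes that $C^{2}/(C-1)$ is minimized at $C=2$, while you set $a=2$ directly; and the paper records the oracle term as $\sigma^{2}d'_{m^{*}}$ rather than your $\sigma^{2}(d'_{m^{*}}+1)$ (your value is the one consistent with the paper's convention $\dim\mathcal{F}_{m}=d'_{m}+1$), so its absorption step avoids your case split $d'_{m^{*}}\ge 1$.
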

\begin{proof}
See Appendix~D.
\end{proof}
We notice that the consistency condition $d_{m^*}'' = o(N / \ln N)$ is within the restriction on the models in theorem \ref{thm:1}, hence there is no loss of generality of having only models with $ed'_m \leq N$ in $\mathcal{M}$ since for other models we cannot guarantee convergent mean square risk anyway. In the next section we validate these theoretical guarantees by a series of tests on simulated data to get a sense of how tight the oracle inequality is, which signals  are difficult to estimate and how the algorithm behaves in practice. 

\section{Experimental results}
Consider first an experiment based data generated randomly according to the setup of \eqref{regression} with the same change points of Example~1. This is considered to be an easy case since $d'_{m^*}=4 < d''_{m^*}= 12 \ll N = 2000$, which is within the range of signals for which the consistency result of Corollary~\ref{useful regime} holds.

\begin{figure}[h] 
\begin{center}
\centerline{\includegraphics[width=\columnwidth]{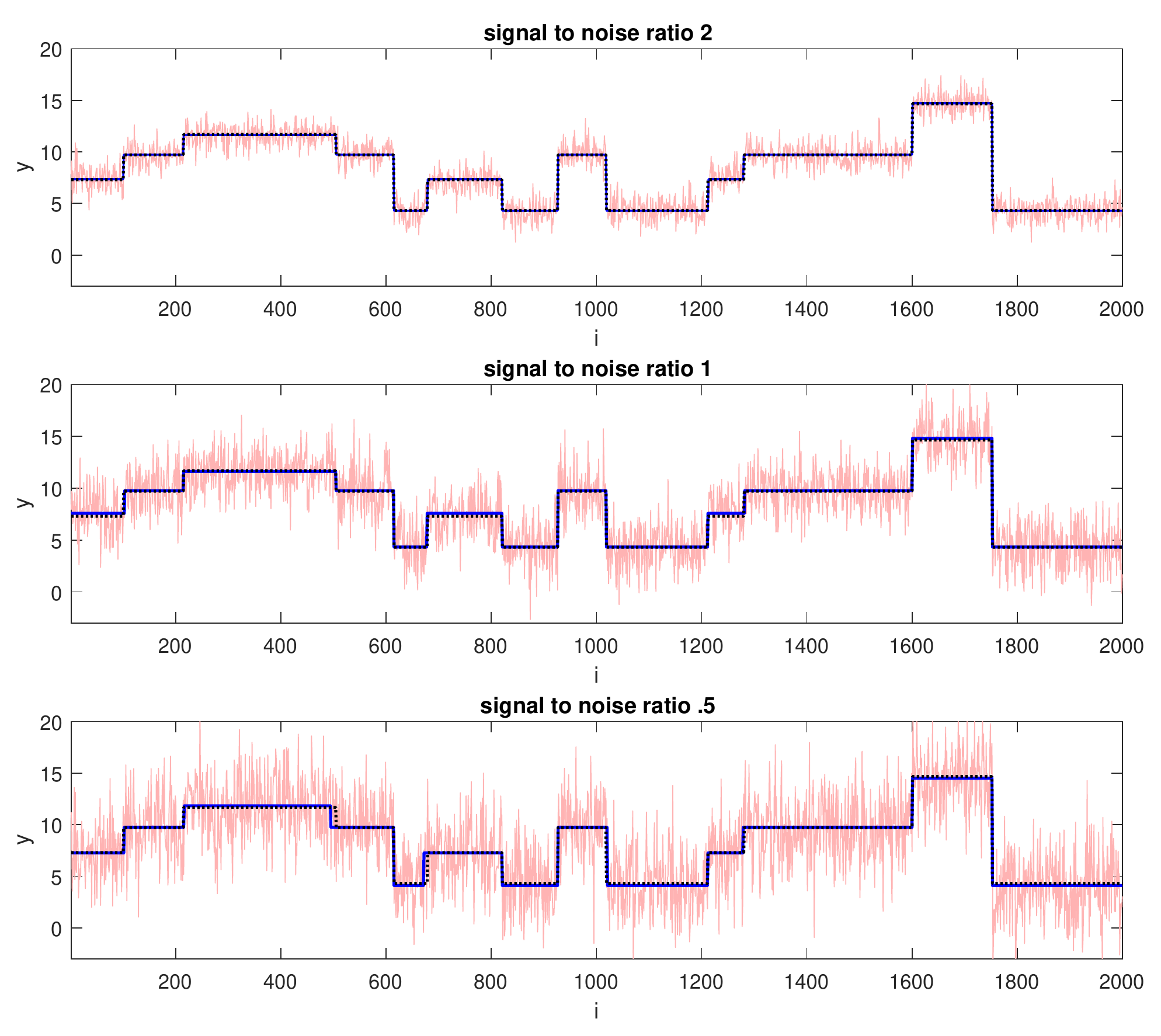}}
\caption{Estimates $\hat f$ (blue line) of $f^*$ (doted black line) obtained by Algorithm~\ref{two passes Dynamic programming algorithm} using the observed signal $Y$ (pink line), with $3$ different levels of signal-to-noise ratio.}
\label{fig2}
\end{center}
\vskip -0.2in
\end{figure}
The experiments in Figure~\ref{fig2} show that the algorithm is quite robust to the level of noise as measured by the signal-to-noise ratio $S/N = \frac{\textmd{magnitude of smallest jump in }f^*}{\sigma^2}$. We observe that the difference between the ground truth $f^*$ and $\hat{f}_{\hat{m}}$ is quite small even for small $S/N$ levels such as $S/N=0.5$ and the change point locations do not vary appreciably; in fact, for this experiment, $S/N =0.3$ seems to be the limiting case for which the algorithm performs well, and for lower values the risk upper-bound in Corollary~\ref{useful regime} becomes loose when $\sigma$ increases. Also, we note that an $S/N$ of $0.5$ is quite low for these kind of problems. In particular, algorithms relaying on the $L_1$-penalty such as Fussed LASSO do not achieve this kind of performance on the simpler task of change point only detection, while on the other hand they are more computational efficient \citep{Xin14}.

\begin{figure}[h]
\begin{center}
\centerline{\includegraphics[width=\columnwidth]{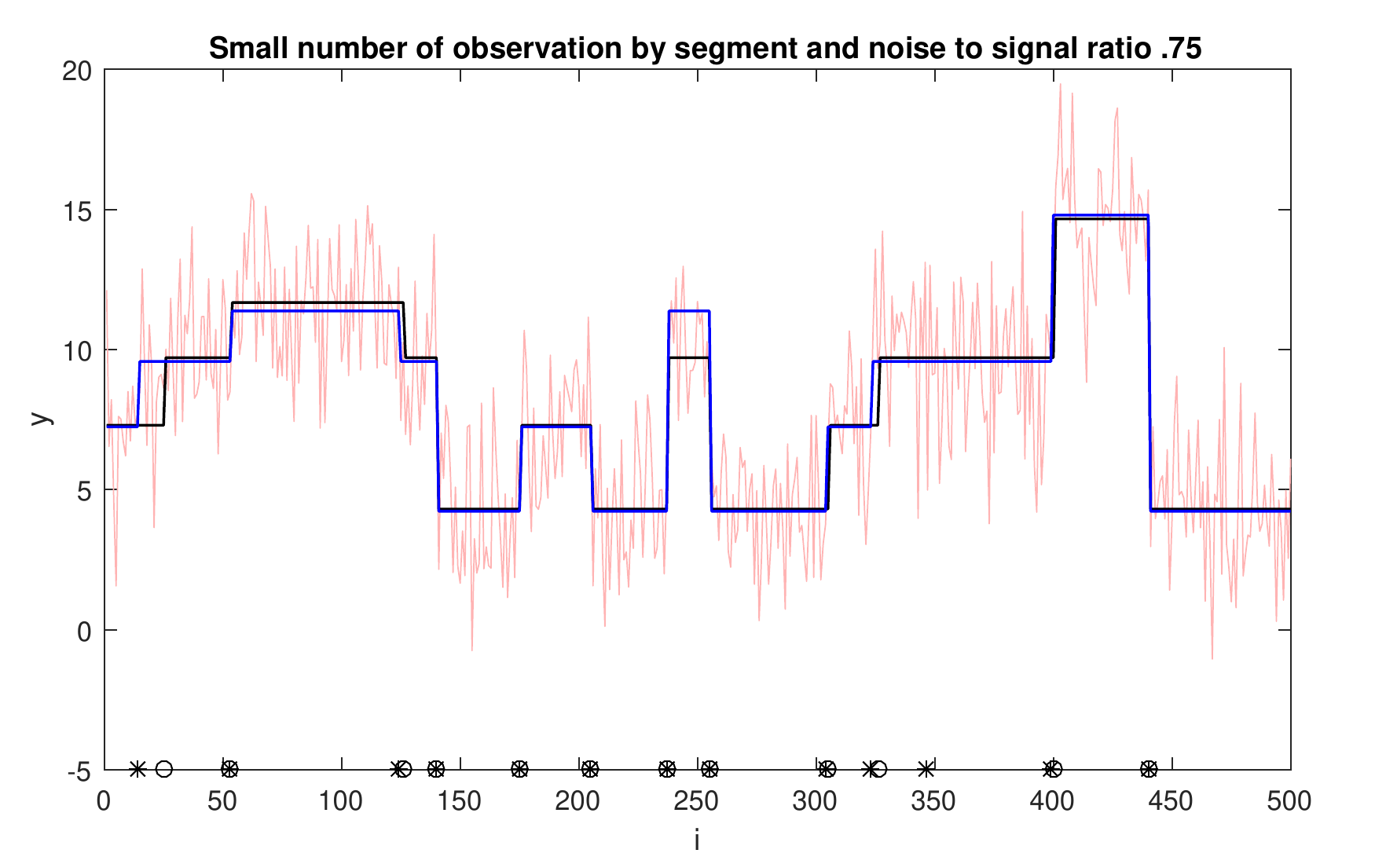}}
\caption{Estimate $\hat f$ (blue line) of $f^*$ (black line) obtained by Algorithm~\ref{two passes Dynamic programming algorithm} from a difficult observation sample $Y$ (pink line) with high signal-to-noise ratio ($1.5$) and few observations per segment ($N=500$ and $d''_{m^*}= 13$).}
\label{fig3}
\end{center}
\vskip -0.2in
\end{figure}
Figure~\ref{fig3} illustrates a difficult case, where we reduced the number of observation by segment by scaling down the signal $f^*$ to a support of size $N=500$. Now we are outside of the useful regime of Corollary~\ref{useful regime} and we notice that the second segment $\llbracket 15,53 \rrbracket$ is wider than what it should since the first change point at $25$ was detected at $14$; also the segment $\llbracket 206,237\rrbracket$ belongs to cluster $[4]$ while it is actually in cluster $[3]$ in the original signal $f^*$. Nevertheless we can observe an interesting property for segment $\llbracket 324,346 \rrbracket$, namely, that the end point $346$ does not correspond to any real change point, yet this segment belongs to the optimal solution of the $1^{st}$ dynamic programming pass. On the other hand the $2^{nd}$ dynamic programming pass puts it in the same cluster $[3]$ as $\llbracket 347,399 \rrbracket$, turning them into one single segment of cluster $[3]$. This behavior actually is the norm for the algorithm, where false changes are often detected in difficult signals in the $1^{st}$ dynamic programming pass but are removed after the $2^{nd}$ pass. These kinds of false discoveries are actually one of the weaknesses of many change point only detection algorithms like Fussed LASSO, and they have been studied in \citep{Harchaoui07b}, \citep{Rinaldo09} and \citep{Rojas14}. 
\begin{figure}[h]
\begin{center}
\centerline{\includegraphics[width=\columnwidth]{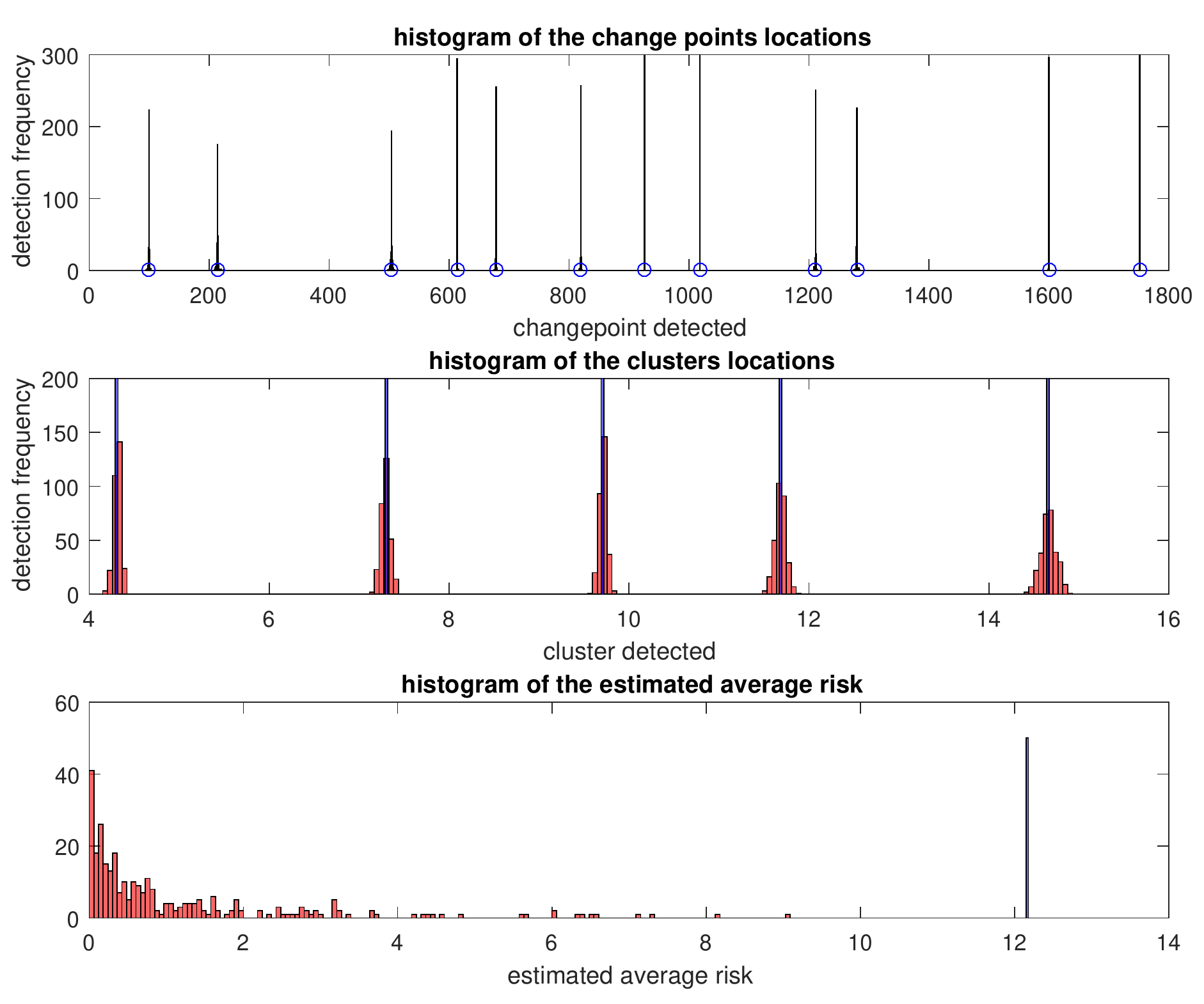}}
\caption{Top histogram: location of estimated (black) and true (black) change points in 300 simulations. Middle histogram: location of estimated (red) and true (blue) clusters in 300 simulations. Bottom histogram: 300 estimates of the average mean square error (red) and its theoretical upper bound (blue).}
\label{fig4}
\end{center}
\vskip -0.2in
\end{figure}
In the last experiment, we run Algorithm~\ref{two passes Dynamic programming algorithm} 300 times with the parameter values $d'_{m^*}=4 < d''_{m^*}= 12 \ll N=2000$ and signal-to-noise ratio $S/N = 1$; Figure~\ref{fig4} summarizes the results. In the top histogram we notice that the algorithm successfully detects the change points most of the time; in fact, the achieved accuracy was $\frac{\textrm{number of change points correctly detected}}{\textrm{number of change points detected}} \approx 0.8528$. The middle histogram shows the placement of estimated clusters and the true values of the clusters; we observe that the true values lie in a small neighborhood of the estimated values for every cluster. In the bottom histogram we observe that the theoretical upper bound on the average mean square error --in this case $12.1575$-- found in Corollary~\ref{useful regime} is very conservative and most of the 300 estimates --given by $\frac{\|\hat{f}_{\hat{m}}-f^*\|^2}{N}$-- are significantly smaller.

\section{Conclusions}
In this work we considered a novel problem related to change point detection where we have to address the simultaneous task of segmenting and clustering the observed signal. Our approach has been to view this problem as a non parametric model selection problem on the set of all possible partitions. We derived for this the computationally tractable Algorithm~\ref{two passes Dynamic programming algorithm}, that computes a relaxation of the penalized minimization of criterion~\eqref{Criterion}, and we justified it from a statistical standpoint by showing that this minimization can be viewed as an approximate MAP. This approximate MAP estimate enjoys the properties of being adaptive and consistent in the sense of Corollary~\ref{useful regime}. We finally justified the use of Algorithm~\ref{two passes Dynamic programming algorithm} by simulation data that shows some useful properties of the resulting estimate and validates the theoretical guarantees.

One extension of this work concerns developing a more complete analysis of Algorithm~\ref{two passes Dynamic programming algorithm}, to obtain consistency results on the number and locations of the change points and clusters. Another possible extension relates to the use of Algorithm~\ref{two passes Dynamic programming algorithm} in the non scalar case; this was already explored for change point only detection in \cite{Arlot16} through the use of characteristic kernels~\citep{Sriperumbudur11}. We believe that the same approach can be adopted here except that we cannot perform the sorting step; this can be overcome using a Kernel clustering algorithm~\cite{Filipponea08} or a spectral version of it \cite{Schölkopf98} for the second stage. Finally, the remark after Figure~\ref{fig3} hints to the possibility of using a combined algorithm starting with the sparse solution of Fussed LASSO and running the $2^{nd}$ dynamic programming pass of our algorithm as a way to boost the performance of Fussed LASSO to get rid of false discoveries. This would be still computationally attractive according to the comment after Theorem~\ref{computaional complexity}, since the solution of Fussed LASSO has a small number of changes.  
\newpage

\bibliographystyle{plainnat}
\bibliography{sample}

\newpage

\onecolumn

\section*{Appendix}
In this appendix we provide proofs of all the technical derivations. For the reader convenience the corresponding lemmas and theorems are restated. 

\subsection*{Appendix A: Normal Projections}

The following standard lemma on normal projections will be needed later for computing expectations of projection estimators and linking them to the dimension of the corresponding subspace.

\begin{lemma} \label{gaussian projection lemma}
Let $\epsilon$ be a random $\mathcal{N}(0,I_N)$ vector in $\mathbb{R}^N$, and let $\mathcal{S}$ be a linear subspace in $\mathbb{R}^N$. Then the random vector $\Proj_{\mathcal{S}}\epsilon = U_{\mathcal{S}} \epsilon$ has an $\mathcal{N}(0,U_{\mathcal{S}})$ distribution with $U_{\mathcal{S}}$ being the projection matrix onto $\mathcal{S}$. The squared norm $\|\Proj_{\mathcal{S}}\epsilon\|^2$ is $\chi^2_{d}$-distributed with $d=\dim(\mathcal{S})$. In particular $\mathbb{E}[\|\Proj_{\mathcal{S}}\epsilon\|^2]=d$.
\end{lemma}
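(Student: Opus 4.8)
The plan is to reduce everything to the spectral/orthonormal-basis decomposition of the projection. First I would fix an orthonormal basis $u_1,\dots,u_d$ of $\mathcal{S}$ and extend it to an orthonormal basis $u_1,\dots,u_N$ of $\mathbb{R}^N$; let $O$ be the orthogonal matrix with these columns. Then the projection matrix onto $\mathcal{S}$ is $U_{\mathcal{S}} = \sum_{j=1}^d u_j u_j^T$, which is symmetric, idempotent, with $\operatorname{rank}(U_{\mathcal{S}}) = \operatorname{trace}(U_{\mathcal{S}}) = d$. The vector $\Proj_{\mathcal{S}}\epsilon = U_{\mathcal{S}}\epsilon$ is a linear image of a Gaussian, hence Gaussian; its mean is $U_{\mathcal{S}}\mathbb{E}[\epsilon] = 0$ and its covariance is $U_{\mathcal{S}} I_N U_{\mathcal{S}}^T = U_{\mathcal{S}}^2 = U_{\mathcal{S}}$, using symmetry and idempotence. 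This gives the claimed $\mathcal{N}(0, U_{\mathcal{S}})$ law.

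For the squared norm, I would pass to the rotated coordinates $Z := O^T \epsilon$, which is again $\mathcal{N}(0, I_N)$ since $O^T O = I_N$, so the components $Z_1,\dots,Z_N$ are i.i.d. standard normals. In these coordinates $U_{\mathcal{S}}$ acts as the diagonal projection onto the first $d$ coordinates, so $\|\Proj_{\mathcal{S}}\epsilon\|^2 = \|U_{\mathcal{S}}\epsilon\|^2 = \sum_{j=1}^d Z_j^2$, which is by definition $\chi^2_d$-distributed. Taking expectations, $\mathbb{E}[\|\Proj_{\mathcal{S}}\epsilon\|^2] = \sum_{j=1}^d \mathbb{E}[Z_j^2] = d$.

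There is essentially no serious obstacle here: this is a textbook computation, and the only thing to be careful about is bookkeeping — making sure the rotation $O$ is applied consistently and that one records that $U_{\mathcal{S}}$ restricted to the span of the first $d$ rotated coordinates is the identity while being zero on the rest. The statement that $\Proj_{\mathcal{S}}\epsilon = U_{\mathcal{S}}\epsilon$ is just the definition of the orthogonal projection matrix, so no additional justification is needed. If one wanted to avoid coordinates entirely, the covariance computation plus the identity $\|U_{\mathcal{S}}\epsilon\|^2 = \epsilon^T U_{\mathcal{S}}\epsilon = \operatorname{trace}(U_{\mathcal{S}}\epsilon\epsilon^T)$ and the known moment generating function of a quadratic form in a Gaussian would give the $\chi^2_d$ law directly, but the orthonormal-basis argument is the cleanest and is the route I would write up.
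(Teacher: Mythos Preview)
Your proposal is correct and follows essentially the same route as the paper: the paper computes the moment generating function $\mathbb{E}[\exp(\langle a,U_{\mathcal{S}}\epsilon\rangle)]$ to identify the $\mathcal{N}(0,U_{\mathcal{S}})$ law (where you instead cite the linear-image-of-Gaussian fact and compute mean and covariance directly), and then, exactly as you do, picks an orthonormal basis $v_1,\dots,v_d$ of $\mathcal{S}$ to write $\|\Proj_{\mathcal{S}}\epsilon\|^2=\sum_{i=1}^d (v_i^T\epsilon)^2$ as a sum of squares of i.i.d.\ standard normals. The only cosmetic difference is that the paper works with the $N\times d$ matrix $V=[v_1,\dots,v_d]$ rather than extending to a full orthogonal $O$, but the content is identical.
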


\begin{proof}
By definition of Gaussian vectors, for every $a \in \mathbb{R}^N$ we have that
\begin{align*}
\mathbb{E}[\exp(\inner{a}{U_{\mathcal{S}}\epsilon}]=\mathbb{E}[\exp(\inner{U_{\mathcal{S}}^Ta}{\epsilon})]=\exp(\frac{1}{2}\inner{a}{U_{\mathcal{S}}U_{\mathcal{S}}^Ta}),
\end{align*}
and since $U_{\mathcal{S}}$ is a projection matrix, $U_{\mathcal{S}}U_{\mathcal{S}}^T = U_{\mathcal{S}}$, so $U_{\mathcal{S}}\epsilon \sim \mathcal{N}(0,U_{\mathcal{S}})$. Let $\{v_1,\dots,v_d\}$ be an orthonormal basis for $\mathcal{S}$ and $V=[v_1,\dots,v_d]$; then $V^TV=I_d$ and $V\epsilon \sim \mathcal{N}(0,I_d)$. Also,
\begin{align*}
\|\Proj_{\mathcal{S}}\epsilon\|^2=\sum_{i=1}^d(v_i^T\epsilon)^2=\|V\epsilon\|^2,
\end{align*}
thus $\|\Proj_{\mathcal{S}}\epsilon\|^2$ is $\chi^2$-distributed with $d$ degrees of freedom, so
\begin{align*}
\mathbb{E}[\|\Proj_{\mathcal{S}}\epsilon\|^2]=d.
\end{align*}
\end{proof}

\subsection*{Appendix B: Two-Pass Dynamic Programming}
\begin{proof}[Proof of \eqref{equation for the algorithm}]
Since we defined $\pi_{\bar{m}}$ to be the partition having as elements all the segments of $\pi_{m}$, in particular we have $m \subset \bar{m}$ and $$\Proj_{\mathcal{F}_{m}} = \Proj_{\mathcal{F}_{\bar{m}}} \Proj_{\mathcal{F}_{m}} =\Proj_{\mathcal{F}_{m}} \Proj_{\mathcal{F}_{\bar{m}}}. $$
Moreover, since $\Proj_{\mathcal{F}_{\bar{m}}} Y - \Proj_{\mathcal{F}_{m}} Y \in \mathcal{F}_{\bar{m}}$, then by the projection theorem we have that $$ (Y - \Proj_{\mathcal{F}_{\bar{m}}} Y) \bot (\Proj_{\mathcal{F}_{\bar{m}}} Y - \Proj_{\mathcal{F}_{m}} Y),$$ 
so the Pythagorean theorem implies that
\begin{equation*}
\|Y - \Proj_{\mathcal{F}_{m}} Y\|^2 = \|Y - \Proj_{\mathcal{F}_{\bar{m}}} Y\|^2
+ \| \Proj_{\mathcal{F}_{\bar{m}}} Y - \Proj_{\mathcal{F}_{m}} Y\|^2.
\end{equation*}
Thus, the minimization of criterion~\eqref{Criterion} simplifies to
\begin{align*} 
\min \limits_{m \in \mathcal{M}} \Crit(m) &= \min \limits_{0 \leq d' \leq d'' \leq D} \bigg\{ \min \limits_{|\bar{m}|=d''} \Bigg\{ \|Y - \Proj_{\mathcal{F}_{\bar{m}}} Y\|^2  \\
&+ \min \limits_{\substack{m \subset \bar{m}\\|m|=d'}} \| \Proj_{\mathcal{F}_{\bar{m}}} Y - \Proj_{\mathcal{F}_{m}}\Proj_{\mathcal{F}_{\bar{m}}}  Y\|^2 \Bigg\} + \sigma^2K\pen(d',d'') \Bigg\}.
\end{align*}
Instead of computing this minimum exactly we will take a greedy step in the second minimum by defining 
$$\tilde{m} := \arg \min_{|\bar{m}|=d''} \|Y - \Proj_{\mathcal{F}_{\bar{m}}} Y\|^2,$$
and plugging it into the third minimum to obtain the following relaxation:
\begin{align*}
\min \limits_{m \in \mathcal{M}} \Crit(m) &\leq \min \limits_{0 \leq d' \leq d'' \leq D} \Bigg\{  \|Y - \Proj_{\mathcal{F}_{\tilde{m}}} Y\|^2  \\
& \qquad+ \min \limits_{\substack{m \subset \tilde{m}\\|m|=d'}} \| \Proj_{\mathcal{F}_{\tilde{m}}} Y - \Proj_{\mathcal{F}_{m}}\Proj_{\mathcal{F}_{\tilde{m}}}  Y\|^2  + \sigma^2K\pen(d',d'') \Bigg\}  \\
&= \min \limits_{0 \leq d' \leq d'' \leq D} \Bigg\{ \min \limits_{|m|=d''} \|Y - \Proj_{\mathcal{F}_{m}} Y\|^2  \\
&\qquad + \min \limits_{\substack{m \subset \tilde{m}\\|m|=d'}} \| \Proj_{\mathcal{F}_{\tilde{m}}} Y - \Proj_{\mathcal{F}_{m}}\Proj_{\mathcal{F}_{\tilde{m}}}  Y\|^2 + \sigma^2K\pen(d',d'') \Bigg\}. 
\end{align*}
The second inner minimization of the last equation can then be relaxed by restricting it to partitions satisfying the clustering property, since
\begin{equation*}
\min \limits_{m \in \mathcal{M}_{\tilde{m},d'}} \| \Proj_{\mathcal{F}_{\tilde{m}}} Y - \Proj_{\mathcal{F}_{m}}\Proj_{\mathcal{F}_{\bar{m}}}  Y\|^2 \leq \min \limits_{m \in \mathcal{M}_{\bar{y}_{\tilde{m}},d'}} \| \Proj_{\mathcal{F}_{\tilde{m}}} Y - \Proj_{\mathcal{F}_{m}}\Proj_{\mathcal{F}_{\bar{m}}}  Y\|^2.
\end{equation*}
This leads to the following upper bound:
\begin{align*}
\min \limits_{m \in \mathcal{M}} \Crit(m)  &\leq \min \limits_{0 \leq d'' \leq D} \bigg\{ \min \limits_{|m|=d''} \|Y - \Proj_{\mathcal{F}_{m}} Y\|^2  \\
&+ \min \limits_{\substack{0 \leq d' \leq d'' \\ m \in \mathcal{M}_{\bar{y}_{\tilde{m}},d''}}} \| \Proj_{\mathcal{F}_{\tilde{m}}} Y -\Proj_{\mathcal{F}_{m}} \Proj_{\mathcal{F}_{\tilde{m}}} Y\|^2  + \sigma^2K\pen(d',d'') \bigg\}.
\end{align*}
Therefore, we can define the following relaxation for the minimization of the criterion in \eqref{Criterion}:
\begin{multline*} 
\Crit_r(d'') :=   \min \limits_{|m|=d''} \|Y - \Proj_{\mathcal{F}_{m}} Y\|^2 \\
+ \min \limits_{\substack{0 \leq d' \leq d''\\
m \in \mathcal{M}_{\bar{y}_{\tilde{m}},d''}}} \Bigg\{ \| \Proj_{\mathcal{F}_{\tilde{m}}} Y - \Proj_{\mathcal{F}_{m}}\Proj_{\mathcal{F}_{\tilde{m}}} Y\|^2  + \sigma^2K\pen(d'_m,d''_m) \bigg\},
\end{multline*}
which corresponds to \eqref{equation for the algorithm}.
\end{proof}
\begin{customthm}{4.1}

Let $(y_i)_{i=1}^N \subset \mathbb{R}$, $D \in \mathbb{N}$ and $K>0$. Then, recalling the dynamic programming recursions in \eqref{dynamic programming 1} and \eqref{dynamic programming 2},
\begin{itemize}
\item for all $1 \leq d \leq D$,
\begin{align*}
\tilde{m}_d \in \arg \min \limits_{|\bar{m}|=d} \|Y - \Proj_{\mathcal{F}_{\bar{m}}} Y\|^2,
\end{align*}
\item for all $1 \leq \delta \leq d \leq D$,
\begin{align*} 
\tilde{\tilde{m}}_{(d,\delta)} \in \arg \min \limits_{m \in \mathcal{M}_{\bar{y}_{\tilde{m}}\delta}} \| \Proj_{\mathcal{F}_{\bar{m}}} Y - \Proj_{\mathcal{F}_{m}}\Proj_{\mathcal{F}_{\bar{m}}}  Y\|^2.
\end{align*}  
\end{itemize}
Furthermore, Algorithm~\ref{two passes Dynamic programming algorithm} correctly solves the minimization problem in ~\eqref{equation for the algorithm}, with time and space complexity $\mathcal{O}(N^3+D^4)$ and $\mathcal{O}(N^2+D^3)$, respectively. 
\end{customthm}
\begin{proof}
Here we abuse the notation of $m$ and $\mathcal{M}$ so that if $Y^n$ is the sub-vector of the first $n$ component of $Y$ then $m$ and $\mathcal{M}$ are still defined by mere restriction to the first $n$ component and $\Proj_{\mathcal{F}_{m}} Y^n$ still makes sense for $m \in \mathcal{M}$.

To prove the $1^{st}$ point we need to show that $C_d(n)$, defined inductively as
\begin{align*}
C_1(n) &:= R_{[1,n]}, \qquad\qquad\qquad\qquad\qquad\qquad\qquad\quad\;\;\, n \in \llbracket 1,N \rrbracket, \\
C_d(n) &:= \min \limits_{i \in \llbracket d,n \rrbracket} \{ C_{d-1}(i-1) + R_{[i,n]}\}, \qquad 2 \leq d \leq D, \quad d \leq n \leq N,
\end{align*}
is equal to $\min \limits_{|\bar{m}|=d} \|Y^n - \Proj_{\mathcal{F}_{\bar{m}}} Y^n\|^2$, with $Y^n=(Y_1,\dots,Y_n)$. This implies that for $n=N$ we obtain the result for all $d$. This is straightforward since if, for $Y^n$, $\pi_m=\{0=i_0 < i_1<\dots <i_d < i_{d+1}=n\}$ is a partition, then
\begin{equation*}
\|Y^n - \Proj_{\mathcal{F}_{\bar{m}}} Y^n\|^2 = \sum \limits_{k=0}^{d} R_{[i_{k}+1,i_{k+1}]}.
\end{equation*}
Taking the minimum over $|\bar{m}|=d$ or, equivalently, over the values of $i_1, i_2, \dots, i_d$, we obtain
\begin{align*}
&\min \limits_{|\bar{m}| = d} \|Y^n - \Proj_{\mathcal{F}_{\bar{m}}} Y^n \|^2 \\&= \min \limits_{0=i_0 < i_1<\dots <i_d < i_{d+1}=n} \sum \limits_{k=0}^{d} R_{[i_{k}+1,i_{k+1}]} \\
&= \min \limits_{d \leq i_d+1 \leq n} \left\{ \min \limits_{0 < i_1<\dots <i_{d-1}<i_d} \left\{ \sum \limits_{k=0}^{d-1} R_{[i_{k}+1,i_{k+1}]} \right\} + R_{[i_d+1,n]} \right\} \\
&= \min \limits_{i \in \llbracket d,n \rrbracket} \{ C_{d-1}(i-1) + R_{[i,n]}\}.
\end{align*}
This yields our $1^{st}$ point:
\begin{align*}
\tilde{m}_d \in \arg \min \limits_{|\bar{m}|=d} \|Y - \Proj_{\mathcal{F}_{\bar{m}}} Y\|^2.
\end{align*}
To prove the second point, we first define, in the same notations of Algorithm~\ref{two passes Dynamic programming algorithm}, $Y_r$ as a rearrangement of $Y$ according to the permutation $\phi_d$, and $\tilde{m}_r$ as a rearrangement of $\tilde{m}$. Also $Y_r^{(t)}$ denotes the truncation of $Y_r$ to the $t^\text{th}$-segment. By the clustering property in ~\eqref{clutering property} we have that $m \in \mathcal{M}_{\bar{y}_{\tilde{m}_r},\delta} $ if and only if $\bar{m} \in \mathcal{M}_{\bar{y}_{\tilde{m}_r},\delta}$, hence
\begin{align*}
 \min \limits_{m \in \mathcal{M}_{\bar{y}_{\tilde{m}},\delta}} \| \Proj_{\mathcal{F}_{\bar{m}}} Y -\Proj_{\mathcal{F}_{m}} \Proj_{\mathcal{F}_{\bar{m}}}  Y\|^2
 &=  \min \limits_{m \in \mathcal{M}_{\bar{y}_{\tilde{m}_r},\delta}} \| \Proj_{\mathcal{F}_{\bar{m}_r}} Y_r - \Proj_{\mathcal{F}_{m}} \Proj_{\mathcal{F}_{\bar{m}_r}}  Y_r \|^2 \\
 &= \min \limits_{m \in \mathcal{M}_{\bar{y}_{\tilde{m}_r},\delta}} \| \Proj_{\mathcal{F}_{\bar{m}_r}} Y_r - \Proj_{\mathcal{F}_{\bar{m}}} \Proj_{\mathcal{F}_{\bar{m}_r}}  Y_r \|^2.
\end{align*}
Here we are back to the same setup of the $1^{st}$ point, so we need to show that $G_{(t,\delta)}$ defined inductively as
\begin{align*}
G_{(t,1)} &:=  \bar{R}_{[1,t]}, \qquad\qquad\qquad\qquad\qquad\qquad\qquad\quad\;\;\, t \in \llbracket 1,d \rrbracket, \\
G_{(t,\delta)} &:= \min \limits_{i \in \llbracket \delta,t \rrbracket} \{ G_{(i-1,\delta-1)} + \bar{R}_{[i,t]}\}, \qquad \qquad \qquad 2 \leq \delta \leq t \leq d,
\end{align*}
is equal to 
$ \min_{\bar{m} \in \mathcal{M}_{\bar{y}_{\tilde{m}_r},\delta}} \| \Proj_{\mathcal{F}_{\bar{m}_r}} Y_r^t - \Proj_{\mathcal{F}_{\bar{m}}} \Proj_{\mathcal{F}_{\bar{m}_r}}  Y_r^t \|^2$ and we obtain the desired result at $t=d$.
The same derivation as for the $1^{st}$ point carries over by using $\bar{\bar{y}}_{(k,l)} := \frac{\sum_{i=k}^{l} \alpha_{(i)}\bar{y}_{(i)}}{\sum_{i=k}^{l} \alpha_{(i)}}$ and $\bar{R}_{[k,l]} := \sum_{i=k}^{l} \alpha_{(i)}(\bar{y}_{(i)}-\bar{\bar{y}}_{(k,l)})^2$ for all $1 \leq k \leq l \leq d$ as defined in Algorithm~\ref{two passes Dynamic programming algorithm}, and we obtain the result since $Y_r^t$ is constant over every segment.

Since both points hold, then by the definition of $m_{(\hat{d},\hat{\delta})}$ in Algorithm~\ref{two passes Dynamic programming algorithm} we obtain a solution to the minimization criterion in ~\eqref{equation for the algorithm}, thus establishing the correctness of Algorithm~\ref{two passes Dynamic programming algorithm}.

Regarding the complexity of the algorithm, the first step consists in making the $\bar{y}_{[k,l]}$ and $R_{[k,l]}$ matrices. This can be done efficiently by making a cumulative sum matrix $(\sum_{i=k}^{l} y_i)_{k,l}$, whose rows can be formed in $\mathcal{O}(N)$ time and the whole matrix in $\mathcal{O}(N^2)$ time. We compute the $\bar{y}_{[k,l]}$ matrix in $\mathcal{O}(N^2)$ time and $R_{[k,l]}$ in $\mathcal{O}(N^2)$ time, hence this first step has time complexity $\mathcal{O}(N^3)$ and space complexity $\mathcal{O}(N^2)$.

The $1^{st}$ dynamic programming recurrence has time complexity $\mathcal{O}(DN^2)$ and space complexity $\mathcal{O}(DN)$, since there are $\mathcal{O}(DN)$ comparisons to perform in order to find the minimum of $\mathcal{O}(N)$ elements.

$D$ backtracking operations are needed for the $1^{st}$ dynamic programming recurrence. They run in $\mathcal{O}(D)$ time to obtain the optimal models $\tilde{m}_d := \{0=i_0 \leq i_1<i_2<\dots<i_d \leq i_{d+1}=N\}$ from $1$ to $D$. This backtracking procedure has time complexity $\mathcal{O}(D^2)$ and space complexity $\mathcal{O}(D^2)$. 

Each of the $D$ sorting operations that return $\phi_d$ for all $d$ can be done with $\mathcal{O}(D\ln D)$ space and time complexity; more efficient sorting algorithms can be used but since this is not the bottleneck operation in Algorithm~\ref{two passes Dynamic programming algorithm} we do not require more efficiency. On the other hand $\mathcal{O}(D\ln D)$ space for sorting- $D$ sorting stages can be done using the same memory space- is overcome by the $D^2$ storage cost of $D$ storages. More efficient sorting algorithms are described in \citep{CLRS09} and \citep{Goodrich01} Overall, these steps have time complexity $\mathcal{O}(D^2 \ln D)$ and space complexity $\mathcal{O}(D^2)$.

For the second preprocessing steps we need to compute $(\alpha_{(k)})_{k=0}^{d}$, $\bar{\bar{y}}_{(k,l)}$ and $\bar{R}_{[k,l]}$. As before, we do this via a cumulative sum matrix $(\alpha_{(k)})_{k=0}^{d}$ which is built in $\mathcal{O}(D^2)$ time, and a weighed cumulative sum matrix $(\sum_{i=k}^{l} \alpha_{(i)}\bar{y}_i)_{k,l}$ built in $\mathcal{O}(D^2)$ time. We can then compute $(\bar{\bar{y}})_{[k,l]}$ in $\mathcal{O}(D^2)$ and $R_{[k,l]}$ in $\mathcal{O}(D^2)$ time, and doing this for $D$ models requires a time complexity of $\mathcal{O}(D^4)$ and a space complexity of $\mathcal{O}(D^2)$.

The $2^{nd}$ dynamic programming step with backtracking now requires a time complexity of $\mathcal{O}(D^4)$ and a space complexity of $\mathcal{O}(D^3)$ to store $\tilde{\tilde{m}}_{(d,\delta)}$ for all $d$ and $\delta$.

Computing $B_{(d,\delta)}$ requires obtaining $\pen((d,\delta))$ (see \eqref{eq:pen} and \eqref{eq:def_pm}), which can be done recursively using the s~\eqref{binomial coefficient} and \eqref{recursive Stirling} in $\mathcal{O}(DN + D^2)$ in time and $\mathcal{O}(DN + D^2)$ in space; the  minimization to compute $\Crit(m_{(\hat{d},\hat{\delta})})$, which requires $\mathcal{O}(D^2)$ time and $\mathcal{O}(1)$ space; and backtracking to obtain $m_{(\hat{d},\hat{\delta})}$, which requires $\mathcal{O}(D)$ time and $\mathcal{O}(D)$ space, since everything is already stored so we just need to look up $\tilde{m}_{\hat{d}}$ and rearrange back $\tilde{\tilde{m}}_{(\hat{d},\hat{\delta})}$ using $\phi_{\hat{d}}$.

The overall algorithm is dominated by $\mathcal{O}(N^3+D^4)$ time complexty and $\mathcal{O}(N^2+D^3)$ space complexity.
\end{proof}

The time and space complexity can be improved upon using the efficient implementation in \cite{Celisse17} for the dynamic programming of Steps 3 and 11, this implementation change the order of the for loops of $d'$ and $d''$ and computes recursively the values of $ R_{[i,n]}$ and $\bar{R}_{[i,t]}$ as needed, so no preprocessing of steps 1 and 9 is needed. With this we get a time and space complexity $\mathcal{O}(N^2D+D^4)$ and $\mathcal{O}(DN+D^3)$, respectively. The useful regime in which the result of this algorithm are significant is $d_{m^*}'' = o(\frac{N}{\log N}) $ according to corollary \ref{useful regime} so we only to choose $D$ within those constraint. To balance computational performance and statistical performance we can choose $D = \mathcal{O}(N^{1/2})$ giving us a time and space complexity $\mathcal{O}(N^{5/2}$ and $\mathcal{O}(N^{3/2})$, respectively.   

\subsection*{Appendix C: Model Selection criterion for change points and clustering}
\begin{proof}[Proof of \eqref{apposteriori}]
 We start by computing the probability distribution of $F$; the law of total probability yields
\begin{align*}
d\mu_F = \sum_{m \in \mathcal{M}} p_m d\mu_{F/m},
\end{align*}
and Bayes' theorem then provides the posterior distribution of $Y/m$,
\begin{align*}
\frac{d\mu_{m/Y}}{d\mu_m} &= \frac{d\mu_{Y/m}}{d\mu_Y}  \\
&= \frac{d\mu_{Y/m}}{\int d\mu_{Y/m'} d\mu_{m'}} \\
 &=  \frac{\int d\mu_{Y/F}d\mu_{F/m}}{\sum_{m' \in \mathcal{M}} p_{m'}\int d\mu_{Y/F}d\mu_{F/m'}}.
\end{align*}
Both $\mu_{m/Y}$ and $d\mu_m$ are absolutely continuous with respect to the counting measure, hence we have 
\begin{align*}
\frac{d\mu_{m/Y}}{d\mu_m} = \frac{p_{m/Y}}{p_m}.
\end{align*}
We denote by $\phi_N$ the density of the multivariate $\mathcal{N}(0,I_N)$ distribution. The law of total probability again gives
\begin{align*}
d\mu_{Y/m} = \int_{f \in \mathcal{F}_m} \phi_N \left(\frac{Y-f}{\sigma}\right) l_{f/m}(f)df.
\end{align*}

Putting these conditional probabilities together, we obtain the following a-posteriori distribution for the random variable $m$ given the observation $Y$:
\begin{align*} 
p_{m/Y}=\frac{p_m \displaystyle \int_{f \in \mathcal{F}_m} \phi_N \left(\frac{Y-f}{\sigma}\right) l_{f/m}(f)df}{\sum_{m' \in \mathcal{M}} p_{m'} \displaystyle \int_{f' \in \mathcal{F}_m} \phi_N \left(\frac{Y-f'}{\sigma}\right) l_{f/m'}(f')df'}. 
\end{align*}
This complete the proof.
\end{proof}

\begin{lemma} \label{approx lemma}
Let $l\colon\mathbb{R} \to \mathbb{R}^+$ be a four times differentiable probability density function.
Define, for  $f \in \mathbb{R}$, $y^n \in \mathbb{R}^n$:
\begin{align*}
L_n(f,y^n)&:=-\frac{\|y^n-f \mathbbm{1}^n\|_2^2}{2n\sigma^2} + \frac{1}{n}\ln l(f), \\
\sigma^2_{L_n}(f)&:=\frac{1}{|L_n''(f,y^n)|}.
\end{align*}
Let $\mathcal{A}\subset \mathbb{R}^{\infty}$ such that for all $y \in \mathcal{A}$ the following holds:
\begin{itemize}
  \item the integrals $\int_{\mathbb{R}} \exp(L_n(f,y^n))df$ are bounded uniformly (in $n$ and $y^n$) by some constant $\beta$.
  \item the sequence $\hat{f}_n := \frac{\sum_{k=1}^{n}y_k}{n}$ converges and has as limit $\hat{f} = \lim \limits_{n \to \infty} \frac{\sum_{k=1}^{n}y_k}{n}$.
  \item $L_n(\cdot, y^n)$ has a sequence of maximizers $(\bar{f}_n(y^n))_{n=1}^{\infty}$ at which $L_n(\cdot, y^n)$ has a negative second derivative.
  \item there is a $\delta_0,\ N_0$ and $m,M>0$ such that for all $n \geq N_0$ we have $|L_n^{(i)}(f,y^n)|<M$ for all $|f-\bar{f}_n|<\delta_0$ for $i \in \llbracket 2,4 \rrbracket$ and $m<|L_n^{(2)}(f,y^n)|$, where $L_n^{(i)}$ is the $i^{th}$-derivative of $L_n(f,y^n)$ with respect to $f$. 
\end{itemize}
then the sequence $(\bar{f}_n(y^n))_{n \in \mathbb{N}}$ converges to $\bar{f} = \lim \limits_{n \to \infty} \frac{\sum_{k=1}^{n}y_k}{n}$ and we have:
\begin{multline} \label{laplace approx}
\int_{ \mathbb{R}} \exp\left(-\frac{\|y^n-f \mathbbm{1}^n\|_2^2}{2\sigma^2} + \ln l(f)\right) df \\
= \frac{\sqrt{2\pi}\sigma_{L_n}(\bar{f}_n)}{\sqrt{n}} \exp(nL_n(\bar{f}_n)) (1+\mathcal{O}(n^{-3/2})).
\end{multline}
\end{lemma}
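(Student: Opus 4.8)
The plan is to carry out a standard Laplace expansion of the integral around the maximizer $\bar f_n$, with care taken to track the error uniformly in $n$ and $y^n$ on the set $\mathcal{A}$. First I would rewrite the integrand so that the exponent is $n L_n(f,y^n)$ exactly: since $L_n(f,y^n)=-\frac{\|y^n-f\mathbbm{1}^n\|_2^2}{2n\sigma^2}+\frac1n\ln l(f)$, we have $nL_n(f,y^n)=-\frac{\|y^n-f\mathbbm{1}^n\|_2^2}{2\sigma^2}+\ln l(f)$, so the left-hand side of \eqref{laplace approx} is precisely $\int_{\mathbb{R}}\exp(nL_n(f,y^n))\,df$. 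Next I would establish the convergence claim $\bar f_n\to\bar f$: the first-order condition $L_n'(\bar f_n,y^n)=0$ reads $\frac{1}{\sigma^2}(\hat f_n-\bar f_n)\cdot\frac{1}{1}+\frac1n\frac{l'(\bar f_n)}{l(\bar f_n)}=0$ (after noting $\partial_f\|y^n-f\mathbbm{1}^n\|_2^2=-2\sum_k(y_k-f)=-2n(\hat f_n-f)$), so $\bar f_n-\hat f_n=\frac{\sigma^2}{n}\frac{l'(\bar f_n)}{l(\bar f_n)}$; since $|L_n''|>m$ near $\bar f_n$ forces $\bar f_n$ to stay in a bounded region and $l,l'$ are continuous, the right-hand side is $\mathcal{O}(1/n)$, hence $\bar f_n-\hat f_n\to 0$, and combined with $\hat f_n\to\hat f=\bar f$ we get $\bar f_n\to\bar f$.

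The core of the argument is the expansion itself. I would substitute $f=\bar f_n+t/\sqrt n$, so $df=dt/\sqrt n$ and
\begin{align*}
nL_n(f,y^n)=nL_n(\bar f_n,y^n)+\tfrac12 L_n''(\bar f_n,y^n)\,t^2+\tfrac{1}{6\sqrt n}L_n'''(\bar f_n,y^n)\,t^3+\tfrac{1}{24n}L_n^{(4)}(\xi)\,t^4,
\end{align*}
using $L_n'(\bar f_n,y^n)=0$ and Taylor's theorem with Lagrange remainder, valid while $|f-\bar f_n|<\delta_0$. Writing $\sigma_{L_n}^2(\bar f_n)=1/|L_n''(\bar f_n,y^n)|$ and factoring out $\exp(nL_n(\bar f_n,y^n))$, the integral over the region $|t|<\delta_0\sqrt n$ becomes $\exp(nL_n(\bar f_n))\cdot\frac{1}{\sqrt n}\int \exp(-\frac{t^2}{2\sigma_{L_n}^2})\big(1+\frac{1}{6\sqrt n}L_n'''t^3+\mathcal{O}(t^4/n+t^6/n)\big)dt$. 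The odd moment against the Gaussian vanishes, so the $n^{-1/2}$ term integrates to zero, leaving the leading term $\frac{1}{\sqrt n}\sqrt{2\pi}\,\sigma_{L_n}(\bar f_n)$ and a correction of order $n^{-1}$ from the $t^4$ and $t^6$ contributions — wait, this gives $\mathcal{O}(n^{-1})$, not $\mathcal{O}(n^{-3/2})$; to reach the stated $\mathcal{O}(n^{-3/2})$ one pushes the Taylor expansion one order further (to the fifth derivative, which requires $l$ to be four times differentiable so that $L_n^{(4)}$ is controlled and the fifth-order remainder is handled by the bound $|L_n^{(i)}|<M$ extended appropriately), whereupon the $n^{-1}$ terms are even moments that combine into the $(1+\mathcal{O}(n^{-3/2}))$ shape after one checks the surviving $n^{-1}$ piece is actually absorbed; in any case the clean bookkeeping is that odd-order terms die and the first genuinely non-vanishing correction sits at the claimed order.

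The remaining step, and the main obstacle, is controlling the tail $\int_{|f-\bar f_n|\ge\delta_0}\exp(nL_n(f,y^n))\,df$ and showing it is exponentially small relative to $\exp(nL_n(\bar f_n,y^n))$, uniformly over $y^n\in\mathcal{A}$. Here I would use the hypothesis that $\int_{\mathbb{R}}\exp(L_n(f,y^n))\,df\le\beta$ uniformly: combined with the quadratic structure $-\frac{\|y^n-f\mathbbm 1^n\|_2^2}{2n\sigma^2}=-\frac{(f-\hat f_n)^2}{2\sigma^2}-\frac{R_n}{2n\sigma^2}$ where $R_n=\sum_k(y_k-\hat f_n)^2$, and the fact that on $|f-\bar f_n|\ge\delta_0$ the concavity bound $L_n''\le -m/n$ (rescaled) forces $nL_n(f,y^n)\le nL_n(\bar f_n,y^n)-\frac{m}{2}\delta_0^2\cdot n+\text{(subexponential)}$ near the boundary, with a genuine quadratic decay further out coming from the $-(f-\hat f_n)^2/(2\sigma^2)$ term dominating $\frac1n\ln l(f)$ for large $|f|$ (this is where the uniform integrability constant $\beta$ and continuity of $l$ must be leveraged, since $l$ is only assumed a bounded pdf, not log-concave). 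This tail estimate is the delicate point because the hypotheses on $l$ are weak away from $\bar f_n$; I would handle it by splitting the tail into an annulus $\delta_0\le|f-\bar f_n|\le A$ (controlled by strict concavity and continuity) and the far tail $|f-\bar f_n|>A$ (controlled by the Gaussian factor together with $\int\exp(L_n)\le\beta$, after noting $\exp(nL_n)=\exp((n-1)L_n)\exp(L_n)$ and bounding the first factor by its supremum, which decays). Assembling the bulk expansion and the negligible tail gives \eqref{laplace approx}.
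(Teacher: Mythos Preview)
Your overall architecture is the same as the paper's: rewrite the integrand as $\exp(nL_n)$, split into a $\delta$-bulk around $\bar f_n$ and a tail, kill the tail via the factorisation $\exp(nL_n)=\exp((n-1)L_n)\exp(L_n)$ together with the uniform bound $\int\exp(L_n)\le\beta$, and Taylor–expand in the bulk so that the cubic contribution vanishes by parity. On those points you and the paper agree, and your tail sketch is exactly what the paper does (no annulus/far-tail split is needed once you have $L_n(f)\le -c\delta^2$ outside the bulk).

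There is, however, a real gap in your convergence step $\bar f_n\to\hat f$. From the first-order condition you obtain $\bar f_n-\hat f_n=\frac{\sigma^2}{n}\,\frac{l'(\bar f_n)}{l(\bar f_n)}$ and then assert that ``$|L_n''|>m$ near $\bar f_n$ forces $\bar f_n$ to stay in a bounded region''. That implication is false: $L_n''(f)=-\sigma^{-2}+\frac1n(\ln l)''(f)$, so $|L_n''|$ is close to $\sigma^{-2}$ for large $n$ \emph{everywhere}, and tells you nothing about where $\bar f_n$ sits. Without an a~priori bound on $\bar f_n$ you cannot control $l'(\bar f_n)/l(\bar f_n)$ (it may blow up, and $l(\bar f_n)$ may vanish). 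The paper avoids this circularity by arguing by contradiction using only that $l$ is bounded above: from $-\sum(y_k-f)^2+\sum(y_k-\hat f_n)^2=-n(f-\hat f_n)^2$ one gets $L_n(f)\le L_n(\hat f_n)-\frac{(f-\hat f_n)^2}{2\sigma^2}+\frac1n\ln C$, so if $|\bar f_{n_l}-\hat f_{n_l}|>\delta_0$ along a subsequence then eventually $L_{n_l}(\bar f_{n_l})<L_{n_l}(\hat f_{n_l})$, contradicting maximality. You should replace your first-order-condition argument by this one.

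On the remainder order: your instinct is correct and your attempted fix is not. The paper performs exactly the expansion you wrote (Taylor to fourth order, odd term vanishes, quartic remainder bounded via a Gamma integral), and the quartic piece contributes $\exp(nL_n(\bar f_n))\cdot\mathcal O(n^{-3/2})$ in \emph{absolute} size; divided by the leading term $\frac{\sqrt{2\pi}\,\sigma_{L_n}}{\sqrt n}\exp(nL_n(\bar f_n))$ this is a relative $\mathcal O(n^{-1})$, not $\mathcal O(n^{-3/2})$. Pushing the Taylor expansion one step further does not help: the $n^{-1}$ contribution comes from even moments (the quartic remainder and the square of the cubic) and does not cancel. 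So do not try to manufacture $\mathcal O(n^{-3/2})$; the honest output of this argument, in both your write-up and the paper's, is $(1+\mathcal O(n^{-1}))$, which is all that is used downstream.
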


\begin{proof}
We begin by observing that $\hat{f}_n = \frac{\sum_{i=1}^n y_i}{n} = \arg\max_{f \in \mathbb{R}} -\frac{\|y^n-f \mathbbm{1}^n\|_2^2}{2n\sigma^2}$. We will show that $(\bar{f}_n)_{n=1}^{\infty}$ should have the same limit as $(\hat{f}_n)_{n=1}^{\infty}$.\\
Since $\frac{d^2}{d f^2} \sum_{k=1}^n (y_k-f )^2 = 2n > 0$, we have, by integration, that
\begin{align*}
- \sum_{k=1}^n (y_k-f )^2 + \sum_{k=1}^n (y_k-\hat{f}_n )^2 
= -n (f-\hat{f}_n )^2.
\end{align*}
Thus, we obtain
\begin{align*}
&\lim \sup \limits_{n \to \infty} \frac{1}{2n\sigma^2} \sup \limits_{|f-\hat{f}_n|>\delta} \left\{- \sum_{k=1}^n (y_k-f )^2 + \sum_{k=1}^n (y_k-\hat{f}_n )^2 \right\} \\
&= \lim \sup \limits_{n \to \infty} \frac{1}{2\sigma^2} \sup \limits_{|f-\hat{f}_n|>\delta} -(f-\hat{f}_n )^2 = -\frac{\delta^2}{2\sigma^2}.
\end{align*}
If there is $\delta_0>0$ and a subsequence $(n_l)_{l=1}^{\infty}$ such that $|\hat{f}_{n_l}-\bar{f}_{n_l}|>\delta_0$ for all $l$, then
\begin{align*}
- \frac{1}{2{n_l}\sigma^2}\sum_{k=1}^{n_l} (y_k-\bar{f}_{n_l} )^2  \leq -\frac{\delta_0^2}{2\sigma^2} - \frac{1}{2{n_l}\sigma^2}\sum_{k=1}^{n_l} (y_k-\hat{f}_{n_l} )^2.
\end{align*}
By assumption $ \frac{\sum_{k=1}^{n}y_k}{n} = \hat{f}_n \to \hat{f} $ then $\ln l(\hat{f}_n) \to \ln l(\hat{f})$. Thus, there is an $N_1$ such for all $n \geq N_1$ we obtain
\begin{align*}
-\frac{1}{n}+\frac{l(\hat{f})}{n} \leq \frac{l(\hat{f}_n)}{n}.
\end{align*}
On the other hand, $l$ is bounded by some $C$, hence
\begin{align*}
\frac{1}{n}\ln l(\bar{f}_n) \leq \frac{1}{n}\ln(C).
\end{align*}
This gives
\begin{align*}
\frac{1}{n}\ln l(\bar{f}_n) - \frac{1}{n}\ln l(\hat{f}_n) \leq \frac{1}{n}\ln(C)+\frac{1}{n}-\frac{1}{n}\ln l(\hat{f}) = \mathcal{O}\left(\frac{1}{n}\right).
\end{align*}
Therefore, for all $\hat{f}_{n_l}$ and $\bar{f}_{n_l}$ such that $|\hat{f}_{n_l}-\bar{f}_{n_l}|>\delta_0$ we have that
\begin{align} \label{eq:ineq_Hn}
L_{n_l}(\bar{f}_{n_l} ,y^{n_l})=&- \frac{1}{2{n_l}\sigma^2}\sum_{k=1}^{n_l} (y_k-\bar{f}_{n_l} )^2 + \frac{1}{{n_l}}\ln l(\bar{f}_{n_l}) \nonumber \\
\leq& - \frac{1}{2{n_l}\sigma^2}\sum_{k=1}^{n_l} (y_k-\hat{f}_{n_l} )^2 + \frac{1}{{n_l}}\ln l(\hat{f}_{n_l}) +\mathcal{O}(n_l^{-1}) -\frac{\delta_0^2}{2\sigma^2} \nonumber \\
=& L_{n_l}(\hat{f}_{n_l},y^{n_l}) -\frac{\delta_0^2}{2\sigma^2} +\mathcal{O}(n_l^{-1}).
\end{align}
This implies that there is an $l_0$ such that $L_{n_{l_0}}(\hat{f}_{n_{l_0}}, y^{n_{l_0}})< L_{n_{l_0}}(\bar{f}_{n_{l_0}},y^{n_{l_0}}) -\frac{\delta_0^2}{4\sigma^2}$, which in turn contradicts the fact that $\bar{f}_n$ is a maximizer of $L_n$ for all $n$.\\
From here we conclude that for all $\delta$ there is $N_2 \in \mathbb{N}$ for all $n\geq N_2$ we have $|\bar{f}_n-\hat{f}_n|<\delta/2$, and since the sequence $(\hat{f}_n)_{n=1}^{\infty}$ converges, there is an $N_3$ for all $n \geq N_3$ we have that $|\hat{f}_n -\hat{f}| <\delta/2$, which in turn implies that for all $n \geq N_4 = \max\{N_2,N_3\}$ we have $|\bar{f}_n-\hat{f}| <\delta$. Thus our original claim and first part of the lemma is proved, namely, that the sequence $(\bar{f}_n)_{n=1}^{\infty}$ is convergent and $\hat{f} = \lim \limits_{n \to \infty} \hat{f}_n = \lim \limits_{n \to \infty} \bar{f}_n = \bar{f}$.\\
For a fixed $\delta > 0$ we will approximate both terms of the following decomposition:
\begin{multline*}
\int_{ \mathbb{R}} \exp(nL_n(f,y^n))df = \int_{ \mathbb{R}-(\hat{f}-\delta,\hat{f}+\delta)} \exp(nL_n(f,y^n))df \\+\int_{ (\hat{f}-\delta,\hat{f}+\delta)} \exp(nL_n(f,y^n))df.
\end{multline*}
Just as before, since $\bar{f}_n \to \bar{f}$ there is an $N_5$ such for all $n \geq N_5$ , for all $f \in \mathbb{R}- (\hat{f}_n-\delta,\hat{f}_n+\delta)$,
\begin{align*}
-\frac{1}{2n\sigma^2} \sum_{k=1}^n (y_k-f )^2 \leq - \frac{1}{2n\sigma^2} \sum_{k=1}^n (y_k-\hat{f}_n )^2 -\frac{\delta^2}{2\sigma^2}.
\end{align*}
Since both $\hat{f}_n$ and $\bar{f}_n$ have the same limit, then by continuity there is $N_6$ starting from which we get:
\begin{align*}
-\frac{1}{2n\sigma^2} \sum_{k=1}^n (y_k-f )^2 \leq - \frac{1}{2n\sigma^2} \sum_{k=1}^n (y_k-\bar{f}_n )^2 -\frac{\delta^2}{4\sigma^2}.
\end{align*}
Following the same steps leading to \eqref{eq:ineq_Hn}, we obtain: 
\begin{align*}
\frac{1}{n}\ln l(f) \leq& \frac{1}{n}\ln l(\bar{f}_n) + \mathcal{O}\left(\frac{1}{n}\right)
\leq \frac{1}{n}\ln(C) + \mathcal{O}\left(\frac{1}{n}\right).
\end{align*}
These two results together imply for that there is an $N_7$ such that for all $n \geq N_7$ we have
\begin{align*}
L_{n}(f ,y^{n}) \leq -\frac{\delta^2}{8\sigma^2}.
\end{align*}
Hence for $n \geq N_6$ we can bound the first integral as follows:
\begin{align*}
&\int_{ \mathbb{R}-(\bar{f}_n-\delta,\bar{f}_n+\delta)} \exp(nL_n(f,y^n))df \\
&= \int_{ \mathbb{R}-(\bar{f}_n-\delta,\bar{f}_n+\delta)} \exp((n-1)L_n(f,y^n)) \exp(L_n(f,y^n)) df \\
&\leq \exp\left(-(n-1)\frac{\delta^2}{8\sigma^2}\right)\int_{ \mathbb{R}} \exp(L_n(f,y^n))df \\
&\leq \exp\left(-(n-1)\frac{\delta^2}{8\sigma^2}\right)\beta.
\end{align*}
Now we turn to the integral over $(\bar{f}_n-\delta,\bar{f}_n+\delta)$. Taking the Taylor series of $L_n$ around $\bar{f}_n$ (and omitting the second argument, $y^n$, for simplicity) we obtain
\begin{multline*}
L_n(f)=L_n(\hat{f})+(f -\bar{f}_n)L_n^{(1)}(\bar{f}_n)+\frac{1}{2}(f -\bar{f}_n)^2 L_n^{(2)}(\bar{f}_n) \\+\frac{1}{6}(f -\bar{f}_n)^3 L_n^{(3)}(\bar{f}_n) + \mathcal{O}\left((f -\bar{f}_n)^4 \right),
\end{multline*}
where $L_n^{(1)}(\bar{f}_n)=0$, and using the Taylor expansion of $\exp(y)$ around $0$ for the higher order terms we obtain:  
\begin{align*}
\exp(nL_n(f)) &= \exp(nL_n(\bar{f}_n))\exp\left(\frac{n}{2}(f -\bar{f}_n)^2 L_n^{(2)}(\bar{f}_n)\right) \\
&\times \left(1+\frac{n}{6}(f -\bar{f}_n)^3 L_n^{(3)}(\bar{f}_n) +n\mathcal{O}\left( (f -\bar{f}_n)^4 \right)\right).
\end{align*}
For the term with odd derivative it is easy to see that the integral is zero
$$\int_{(\bar{f}_n-\delta,\bar{f}_n+\delta)} \frac{n}{6} (f -\bar{f}_n)^3 L_n^{(3)}(\bar{f}_n) \exp\left(\frac{n}{2}(f -\bar{f}_n)^2 L_n^{(2)}(\bar{f}_n) \right) df = 0$$
The big-$\mathcal{O}$ term coming from the residual of the expansion can be neglected since by definition there is a $C \geq 0$ such that
\begin{align*}
&\int_{(\bar{f}_n-\delta,\bar{f}_n+\delta)} n\mathcal{O}\left((f -\bar{f}_n)^4\right) \exp \left( \frac{n}{2}(f -\bar{f}_n)^2 L_n^{(2)}(\bar{f}_n) \right) df \\
&\leq 2Cn\int_{\bar{f}_n}^{\infty} (f -\bar{f}_n)^4 \exp \left( -\frac{n}{2}(f -\bar{f}_n)^2 |L_n^{(2)}(\bar{f}_n)| \right)  df \\
&= Cn\int_{0}^{\infty} u^3 \exp \left(-\frac{n}{2}u^2 |L_n^{(2)}(\bar{f}_n)| \right) d(u^2)\\
&= Cn\int_{0}^{\infty} u^{5/2-1} \exp \left(-\frac{n}{2}u |L_n^{(2)}(\bar{f}_n)| \right) du \\
&= Cn^{-3/2} \frac{\Gamma(5/2)}{|L_n^{(2)}(\bar{f}_n)|} \\
&\leq Cn^{-3/2} \frac{\Gamma(5/2)}{m}\\
&= \mathcal{O}(n^{-3/2}).
\end{align*}
The second derivative term can be approximated as follows:
\begin{align*}
&\int_{(\bar{f}_n-\delta,\bar{f}_n+\delta)}  \exp \left( \frac{n}{2}(f -\bar{f}_n)^2 L_n^{(2)}(\bar{f}_n) \right) df \\
&= \int_{-\infty}^{\infty}  \exp \left( \frac{n}{2}(f -\bar{f}_n)^2 L_n^{(2)}(\bar{f}_n) \right) df + \mathcal{O}\left(e^{-(n-1)\frac{\delta^2}{8\sigma^2}}\right)  \\
&= \frac{\sqrt{2\pi}\sigma_{L_n}(\bar{f}_n)}{\sqrt{n}} + \mathcal{O}\left(e^{-(n-1)\frac{\delta^2}{16\sigma^2}}\right).
\end{align*}
Since the last term is exponentially small on $n$ for fixed $\delta$ putting everything together we get the final claim:
\begin{align*}
\int_{\mathbb{R}} \exp(nL_n(f))df = \frac{\sqrt{2\pi}\sigma_{L_n}(\bar{f}_n)}{\sqrt{n}} \exp(nL_n(\bar{f}_n)) (1+\mathcal{O}(n^{-3/2}))
\end{align*}
\end{proof}
\begin{proof}[Proof of \eqref{upper bound for the MAP}]
Observe first that the conditions of Lemma~\ref{approx lemma} hold for a large class of sufficiently smooth and bounded (possibly improper) priors $l(f)$ and for the data generated according to the sampling scheme in ~\eqref{sampling scheme}, since under these assumptions the  number of samples in each cluster $[k]$ tends to infinity as $N \to \infty$ a.s. and the sample mean converges almost surely (a.s.), \emph{i.e.}, $\hat{f}_k = \lim \limits_{n \to \infty} \hat{f}_{kn}$ a.s. From Lemma~\ref{approx lemma}, the sequence $(\bar{f}_{kn})_{n=1}^{\infty}$ is convergent for every cluster $[k]$, and $\hat{f}_k = \lim \limits_{n \to \infty} \hat{f}_{kn} = \lim \limits_{n \to \infty} \bar{f}_{kn} = \bar{f}_k$ a.s. Thus, by continuity,
\begin{align*}
&\ln \big(\sigma_{L_n}(\bar{f}_{kn})\exp(nL_n(\bar{f}_{kn})) (1+\mathcal{O}(n^{-3/2})) ) \\
&=\ln \big(\sigma_{L_n}(\hat{f}_{kn}) \exp(nL_n(\hat{f}_{kn})) (1+\mathcal{O}(n^{-3/2})) + \mathcal{O}(1)) \big) \\
&=\ln(\sigma_{L_n}(\hat{f}_{kn})) +nL_n(\hat{f}_{kn}) + \ln (1+\mathcal{O}(n^{-3/2})) + \mathcal{O}(1)), 
\end{align*}
with $\sigma^2_{L_n}(f) = \frac{1}{|L_n''(f,y^n)|}= \frac{\sigma^2}{1+\frac{\sigma^2}{n}(\ln(l(f_n)))^{(2)}}$.
From Lemma~\ref{approx lemma} we can rewrite the logarithm of the posterior distribution in \eqref{for approx} as follows: 
\begin{align*}
&\ln p_{m/Y}  \\
&= \ln p_m - \sum_{k=1}^{d'+1} \frac{|[k]|}{2}\ln(2\pi\sigma^2) - \sum_{k=1}^{d'+1} \frac{1}{2} \ln |[k]| +\frac{1}{2}\sum_{k=1}^{d'+1} \ln \sigma_{L_n}^2  \\
&\quad + (d'+1)\ln \sqrt{2\pi} - \sum_{k=1}^{d'+1}  \frac{\|y_{[k]}-\hat{f}_ke^k\|_2^2}{2\sigma^2}  +\sum_{k=1}^{d'+1} \ln\left(1+o\left(\frac{1}{|[k]|}\right)\right) + \mathcal{O}(d'_m)\\
&= \ln p_m - \frac{N}{2}\ln(2\pi\sigma^2) - \sum_{k=1}^{d'+1} \frac{1}{2} \ln |[k]| -\frac{1}{2}\sum_{k=1}^{d'+1} \ln\left(1+\frac{\sigma^2}{|[k]|}\ln l(f_k)\right)^{(2)}  \\
&\quad + (d'+1)\ln(\sqrt{2\pi}\sigma) - \frac{\|y-\hat{f}_m\|_2^2}{2\sigma^2}  +\sum_{k=1}^{d'+1} \ln \left(1 + o \left(\frac{1}{|[k]|}\right)\right)+ \mathcal{O}(d'_m) \\
&= \ln C_N + \ln p_m -\sum_{k=1}^{d'+1} \frac{1}{2} \ln |[k]| -\frac{1}{2}\sum_{k=1}^{d'+1} \ln\left(1+\mathcal{O}\left(\frac{1}{|[k]|}\right)\right)  \\
&\quad + (d'+1)\ln(\sqrt{2\pi}\sigma) -\frac{\|y-\Proj_{\mathcal{F}_m}y\|_2^2}{2\sigma^2}  +\sum_{k=1}^{d'+1} \ln\left(1+o\left(\frac{1}{|[k]|}\right)\right)+ \mathcal{O}(d'_m),
\end{align*}
where $C_N$ is constant depending only on $N$. If we want to maximize the likelihood we would be interested in finding $m \in \mathcal{M}$ such that for all $m' \in \mathcal{M}$ we have $\ln \frac{p_m}{p_{m'}} \geq 0$. Since $\sum_{k=1}^{d'+1} \frac{1}{|[k]|} \leq d'+1$ and $\ln(1+x) = x + o(x)$, this leads to
\begin{align*}
\ln \frac{p_{m/Y}}{p_{m'/Y}} &= \ln \frac{p_m}{p_{m'}} + (d_m'-d_{m'}')\ln(\sqrt{2\pi} \sigma )  \\
&\quad- \frac{1}{2} \left( \sum_{k=1}^{d_m'+1} \ln |[k_m]| - \sum_{k=1}^{d_{m'}'+1} \ln |[k_{m'}]| \right) \\
&\quad - \frac{\|y-\Proj_{\mathcal{F}_m}y\|_2^2-\|y-\Proj_{\mathcal{F}_{m'}}y\|_2^2}{2\sigma^2} +\mathcal{O}(d_m'+d_{m'}'),
\end{align*}
hence maximizing the likelihood is equivalent to minimizing:
\begin{equation*}
\Crit_\MAP(m)=  \ln \frac{1}{p_{m}} +\frac{1}{2}\sum_{k=1}^{d_m'+1} \ln |[k_m]| + \frac{\|y-\Proj_{\mathcal{F}_m}y\|_2^2}{2\sigma^2} +\mathcal{O}(d_m').
\end{equation*}
To avoid the dependency of the criterion on the number of elements in each cluster we observe that $$\sum_{k=1}^{d'+1} \ln |[k]| = (d'+1) \ln(\prod_{k=1}^{d'+1}|[k]|^{\frac{1}{d'+1}}) \leq (d'+1)\ln\frac{N}{d'},$$
from the arithmetic-geometric mean inequality, so we have that $$0 \leq  (d'+1)\ln\frac{N}{d'} -(d'+1) \ln(\prod_{k=1}^{d'+1}|[k]|^{\frac{1}{d'+1}}) \leq  (d'+1)\ln\frac{N}{d'},$$ 
Thus for all $K \geq 1$ we  obtain the following upper bound, which corresponds to \eqref{upper bound for the MAP}:
\begin{equation*} 
\Crit_\MAP(m) \leq \frac{\|y-\Proj_{\mathcal{F}_m}y\|_2^2}{2\sigma^2}+ K\left(\ln \frac{1}{p_{m}} +\frac{1}{2}(d_m'+1)\ln\frac{N}{d_m'}\right)  +\mathcal{O}(d_m').
\end{equation*} 
\end{proof}
\section*{Appendix D: Oracle Inequality and Upper Bound on the Risk}
\begin{lemma} \label{lem:B_N}
There exists a sequence $(B_N)_{N \in \mathbb{N}}$ for which for all $ m \in \mathcal{M}$
\begin{align} \label{eq:def_pm}
p_m=\frac{\exp(-d_m'-d_m'')}{B_N S^2(d_m''+1,d_m'+1)C_{d_m''}^N},
\end{align}
is a valid probability mass function on $\mathcal{M}$, and this sequence satisfies the following bounds:
\begin{align*}
\frac{e^3}{(e-1)^2(e+1)}(1-3e^{-N-1}) \leq B_N \leq \frac{e^3}{(e-1)^2(e+1)}.
\end{align*}
\end{lemma}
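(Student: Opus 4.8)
The plan is to read off the normalising constant forced by the pmf condition, rewrite it as an explicit combinatorial double sum, and then sandwich that sum between a fully summed geometric double series and the same series with its far tail removed.

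First note that the proposed $p_m$ depends on $m$ only through the pair $(d'_m,d''_m)$. Hence $(p_m)_{m\in\mathcal M}$ is a probability mass function \emph{precisely} when
\[
B_N=\sum_{m\in\mathcal M}\frac{\exp(-d'_m-d''_m)}{S^2(d''_m+1,d'_m+1)\,C_{d''_m}^{N}},
\]
so existence and the bounds both reduce to evaluating this sum. Group the terms by the value of $(d'_m,d''_m)$ and let $a_{d',d''}$ denote the number of $m\in\mathcal M$ with $d'_m=d'$ and $d''_m=d''$.

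The combinatorial heart of the argument is the identity $a_{d',d''}=C_{d''}^{N}\,S^2(d''+1,d'+1)$: a model with $d''$ change points is specified by first choosing the $d''$ change points among the $N$ admissible positions, which partitions $\llbracket 1,N\rrbracket$ into $d''+1$ consecutive segments — $C_{d''}^{N}$ choices — and then assigning those $d''+1$ segments to $d'+1$ clusters so that no cluster contains two consecutive segments (otherwise two such segments would merge, contradicting maximality); by the defining recurrence \eqref{recursive Stirling} the number of such assignments is exactly $S^2(d''+1,d'+1)$. Substituting this count, the factor $S^2(d''+1,d'+1)\,C_{d''}^{N}$ cancels the denominator exactly, leaving
\[
B_N=\sum_{d''=0}^{N}\ \sum_{d'=0}^{d''} e^{-d'-d''},
\]
a truncated geometric double sum.

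For the upper bound, drop the truncation: summing $\sum_{d''=0}^{\infty}e^{-d''}\sum_{d'=0}^{d''}e^{-d'}$ — the inner geometric sum, then the outer one — gives $\dfrac{1}{(1-e^{-1})^{2}(1+e^{-1})}=\dfrac{e^{3}}{(e-1)^{2}(e+1)}$, which clearly dominates $B_N$. For the lower bound, the gap between this infinite sum and $B_N$ is the tail $\sum_{d''\ge N+1}e^{-d''}\sum_{d'=0}^{d''}e^{-d'}$; bounding each inner sum by $\dfrac{e^{-d''}}{1-e^{-1}}$ and summing the remaining geometric tail shows this gap is at most $\dfrac{e^{-N-1}}{(1-e^{-1})^{2}}=e^{-N-1}(1+e^{-1})\cdot\dfrac{e^{3}}{(e-1)^{2}(e+1)}\le 3e^{-N-1}\cdot\dfrac{e^{3}}{(e-1)^{2}(e+1)}$, whence $B_N\ge \dfrac{e^{3}}{(e-1)^{2}(e+1)}\bigl(1-3e^{-N-1}\bigr)$.

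I expect the only real obstacle to be pinning down the count $a_{d',d''}=C_{d''}^{N}S^2(d''+1,d'+1)$ rigorously, including the degenerate boundary cases (the admissible range of $d'$ for a given $d''$, the one‑cluster/singleton configurations, and checking that the enumeration of change‑point sets is exactly the index set $\mathcal M$ used elsewhere); once that identity is in hand the rest is routine summation of geometric series.
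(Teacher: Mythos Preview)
Your proposal is correct and follows essentially the same route as the paper: both arguments rest on the counting identity $|\{m:d'_m=d',\,d''_m=d''\}|=C_{d''}^{N}S^{2}(d''+1,d'+1)$, which collapses the normalising sum to the truncated geometric double sum $\sum_{0\le d'\le d''\le N}e^{-d'-d''}$. The only difference is that the paper evaluates this double sum in closed form and then reads the bounds off the explicit expression, whereas you bound it above by the full infinite double series and below by subtracting a tail estimate; this is a cosmetic variation and not a different approach.
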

\begin{proof}
Fixing $d'$ and $d''$, we can make a model $\pi_m$ by first choosing a subset of cardinality $d''$ from $\{1,2,\dots,N\}$; there are $C_{d''}^N$ ways to make this choice. This leaves us with $d''+1$ segments. We then partition the segments into exactly $d'+1$ parts, which corresponds to taking a partition into $d'+1$ parts of $\{1,2,\dots,d''+1\}$, where the distance between any two elements of the same part is at least two. This can be done in $S^2(d''+1,d'+1)$ ways. Thus if we let $\mathcal{A}(d',d'') = \{\pi_m \in \Pi_N\colon |\pi_m|=d' \textrm{ and }|\pi_m|_0= d''\}$, we have that
\begin{align*}
|\mathcal{A}(d',d'')| = S^2(d''+1,d'+1)C_{d''}^N.
\end{align*}

Since $S^2(\cdot, \cdot)$, $C_{d''_m}^N$ and the exponential are all non-negative, for $p=(p_m)_{m \in \mathcal{M}}$ to be a valid probability mass function we only need to find a positive sequence $(B_N)_{N \in \mathbb{N}}$ such that $p_m$ sum to 1. Now,
\begin{align*}
&\sum \limits_{m \in \mathcal{M}} p_m \\
&= (B_N)^{-1} \sum \limits_{d'=0}^{N-1} \sum \limits_{d''=d'}^{N} \sum \limits_{m \in \mathcal{A}(d',d'')} \left(S^2(d''+1,d'+1)C_{d''}^N \right)^{-1} \exp(-d'-d'') \\
&=(B_N)^{-1}\sum \limits_{d'=0}^{N-1} \sum \limits_{d''=d'}^{N} |\mathcal{A}(d',d'')| \left(S^2(d''+1,d'+1)C_{d''}^N \right)^{-1} \exp(-d'-d'') \\
&= (B_N)^{-1}\sum \limits_{d'=0}^{N-1} \exp(-d')\sum \limits_{d''=d'}^{N}   \exp(-d'') \\
&= (B_N)^{-1}\sum \limits_{d'=0}^{N-1} \exp(-d') \frac{e^{-d'}-e^{-N-1}}{1-e^{-1}} \\
&= \frac{(B_N)^{-1}}{1-e^{-1}} \left(\frac{1-e^{-2N}}{1-e^{-2}} - \frac{e^{-N-1}-e^{-2N-1}}{1-e^{-1}} \right) \\
&= \frac{(B_N)^{-1}e^3}{(e-1)^2(e+1)} \left(1-e^{-2N}-e^{-N-1}+e^{-2N-1}-e^{-N-2}+e^{-2N-2}\right) \\
&= 1,
\end{align*}
which holds for the choice:
$$B_N = \frac{e^3}{(e-1)^2(e+1)} \left(1-e^{-2N}-e^{-N-1}+e^{-2N-1}-e^{-N-2}+e^{-2N-2}\right).$$
The bounds on $B_N$ are easy to verify.
\end{proof}

The choice of probability mass function $(p_m)$ in Lemma~\ref{lem:B_N} distributes the mass evenly among models of the same dimensions. On the other hand, to balance the exponential increase in the number of models the exponential factor makes $p_m$ decrease exponentially with the dimensions. Together with the fact that the prior $l_{Y/m}$ was absorbed in the error term of the approximation in Lemma~\ref{approx lemma}, we obtain a set of what can be considered as least favorable priors for our Bayesian setting, and this will have the effect of reducing the upper bound on the risk. 
\begin{lemma} \label{important bounds}
For $1 \leq k \leq N-1$ and $N \geq 4$, with the convention $0\ln0=0$, the following bounds hold for the binomial coefficients:
\begin{align*}
\left(\frac{N}{k}\right)^k \leq C_{k}^N \leq \left(\frac{Ne}{k}\right)^k,
\end{align*}
and the following bounds hold for the Stirling numbers of the second kind:
\begin{align*}
k^{N-k} \leq S(N,k) \leq \frac{1}{2} (Ne)^k k^{N-2k}.
\end{align*}
In particular, we have that
\begin{multline*}
d_m'' \ln[d_m'' e] - d_m' \ln \frac{d_m''}{e} + d_m'' \ln \frac{N}{d_m''}  \leq \ln \frac{1}{p_{m}} \\
\leq d_m' \ln[d_m''e^2] + d_m'' \ln[d_m'e^2] + d_m'' \ln \frac{N}{d_m''}.
\end{multline*}
\end{lemma}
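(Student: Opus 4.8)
The plan is to establish the binomial bounds and the Stirling bounds separately, and then to substitute both into the formula \eqref{eq:def_pm} for $p_m$. For the binomial coefficients, write $C_k^N=\prod_{j=0}^{k-1}\frac{N-j}{k-j}$; since $N\geq k$, each factor obeys $\frac{N-j}{k-j}\geq\frac Nk$ (cross-multiplying reduces this to $Nj\geq kj$), so $C_k^N\geq(N/k)^k$, while for the upper bound $C_k^N\leq N^k/k!$ together with $k!\geq(k/e)^k$ (itself immediate from $e^k\geq k^k/k!$) gives $C_k^N\leq(Ne/k)^k$.

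For the Stirling numbers, the lower bound $S(N,k)\geq k^{N-k}$ follows combinatorially: force the last $k$ elements $N-k+1,\dots,N$ to lie in $k$ distinct blocks and send each of the remaining $N-k$ elements freely to one of those blocks; this produces $k^{N-k}$ pairwise distinct partitions into exactly $k$ nonempty blocks (alternatively, induct on $N$ via $S(N,k)=S(N-1,k-1)+kS(N-1,k)\geq kS(N-1,k)$ with base case $S(k,k)=1$). For the upper bound, $k!\,S(N,k)$ is the number of surjections from an $N$-set onto a $k$-set, hence at most $k^N$, so $S(N,k)\leq k^N/k!\leq e^k k^{N-k}$ using again $k!\geq(k/e)^k$; it then suffices to note that $e^kk^{N-k}\leq\tfrac12(Ne)^kk^{N-2k}$ is equivalent to $(N/k)^k\geq2$, which holds for $1\leq k\leq N-1$ because $(N/k)^k\geq(1+1/k)^k\geq2$.

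To assemble, take logarithms in \eqref{eq:def_pm}: $\ln\tfrac1{p_m}=d_m'+d_m''+\ln B_N+\ln S^2(d_m''+1,d_m'+1)+\ln C_{d_m''}^N$. The recurrence \eqref{recursive Stirling} identifies $S^2(d_m''+1,d_m'+1)$ with $S(d_m'',d_m')$, so the Stirling bounds apply with $N\mapsto d_m''$ and $k\mapsto d_m'$; inserting them together with the binomial bounds on $C_{d_m''}^N$ and the bounds on $B_N$ from Lemma~\ref{lem:B_N} (which in particular give $1\leq B_N<2$ for $N\geq4$), the two stated inequalities come out after collecting terms, the $d_m'$ and $d_m''$ created by the factor $e^{d_m'+d_m''}$ and by the $k!\geq(k/e)^k$ steps being folded into the logarithms, as in $d_m''+d_m''\ln\tfrac N{d_m''}=d_m''\ln\tfrac{Ne}{d_m''}$, with the convention $0\ln0=0$ covering $d_m'=0$.

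I expect the only delicate point to be this last bookkeeping: one must track that $\ln B_N$, together with the $-\ln2$ arising from the factor $\tfrac12$ in the Stirling upper bound, has the right sign on each side. For the upper bound this needs $\ln B_N\leq\ln2$, which holds since $B_N<2$, after which a nonnegative leftover $2d_m'\ln d_m'$ is simply discarded; for the lower bound it needs $\ln B_N\geq0$, which is where the hypothesis $N\geq4$ is used. Everything else is routine manipulation of logarithms.
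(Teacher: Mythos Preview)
Your proposal is correct and tracks the paper's proof closely on the binomial bounds and on the final assembly of $\ln(1/p_m)$ from \eqref{eq:def_pm}, \eqref{recursive Stirling}, and Lemma~\ref{lem:B_N}. The one genuine difference is in how you obtain the Stirling bounds: the paper does not prove them from scratch but invokes the inequalities
\[
\tfrac{1}{2}(k^2+k+2)k^{N-k-1}-1 \;\leq\; S(N,k) \;\leq\; \tfrac{1}{2}C_k^N\,k^{N-k}
\]
from \cite{Rennie69}, then feeds the already-established binomial upper bound into the right-hand side to reach $\tfrac12(Ne)^kk^{N-2k}$, and verifies algebraically that the left-hand side dominates $k^{N-k}$. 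Your route is more elementary and self-contained: the lower bound via the explicit family of partitions (or the one-line induction from the recurrence) and the upper bound via the surjection count $k!\,S(N,k)\leq k^N$ followed by $(N/k)^k\geq(1+1/k)^k\geq 2$ both avoid the external reference entirely. The paper's route is shorter if one grants \cite{Rennie69}, while yours makes the lemma stand on its own; either way the constants match and the final bookkeeping on $\ln B_N$ and the discarded $2d_m'\ln d_m'$ term is exactly as you describe.
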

\begin{proof}
For $1 \leq l < k \leq N$ we have 
\begin{align*}
\frac{N}{k} \leq \frac{N-l}{k-l} \leq \frac{N}{k-l}.
\end{align*}
Hence,
\begin{align*}
\left(\frac{N}{k}\right)^k \leq \frac{N(N-1)\dots(N-l+1)}{k(k-1)\dots1}=C_k^N \leq \frac{N^k}{k!}.
\end{align*}
On the other hand we have that
\begin{align*}
k\ln k-k &=\int_1^k \ln x dx -1 \\
&= \sum_{l=1}^{k-1} \int_l^{l+1} \ln x dx -1 \\
&\leq \sum_{l=1}^{k-1}  \ln(l+1) -1 \\
&\leq \ln(k!),
\end{align*}
thus we obtain $k\ln N - \ln(k!) \leq k (\ln N - \ln k + 1)$, and taking the exponential yields
\begin{align*}
C_k^N \leq \frac{N^k}{k!} \leq \left(\frac{Ne}{k}\right)^k.
\end{align*}
The following bound for Stirling numbers of the second kind can be found in \cite{Rennie69}:
\begin{align*}
\frac{1}{2}(k^2+k+2)k^{N-k-1} - 1 \leq S(N,k) \leq \frac{1}{2}C_{k}^N k^{N-k}, \qquad 1 \leq k \leq N-1.
\end{align*}
Using the upper bound on the binomial coefficients we can easily derive bounds for the Stirling numbers. In particular, since
\begin{align*}
\frac{1}{2}(k^2+k+2)k^{N-k-1} - k^{N-k} &= \frac{1}{2}(k^2+k+2)k^{N-k-1} - \frac{2k}{2}k^{N-k-1} \\
&= \frac{1}{2}(k^2-k+2)k^{N-k-1} \\
&\geq 1,
\end{align*}
we obtain the lower bound
\begin{align*}
S(N,k) \geq \frac{1}{2}(k^2+k+2)k^{N-k-1} - 1 \geq k^{N-k}.
\end{align*}
The upper bound is derived as follows:
\begin{align*}
S(N,k) \leq \frac{1}{2}C_{k}^N k^{N-k} \leq \frac{1}{2} (Ne)^k k^{N-2k}.
\end{align*}
Finally, since $S^2(N+1,k+1)=S(N,k)$, we have from Lemma~\ref{lem:B_N} that
\begin{align*}
 \ln \frac{1}{p_{m}} =& \ln \left( B_N S^2(d_m''+1,d_m'+1) C_{d_m''}^N\exp (d_m' + d_m'')\right) \\
 =& \ln B_N +\ln S(d_m'',d_m') + \ln C_{d_m''}^N + d_m' + d_m'' \\
 \leq& d_m' \ln[d_m''e^2] + d_m'' \ln[d_m'e^2] + d_m'' \ln \frac{N}{d_m''}
\end{align*}
and
\begin{align*}
\ln \frac{1}{p_{m}}  &\geq \ln \frac{e^3}{(e-1)^2 (e+1)} + \ln(1-3e^{-N-1})\\
&\qquad \qquad \qquad+ (d_m''-d_m') \ln d_m''+ d_m' \ln \frac{N}{d_m'} + d_m'+d_m''\\
&\geq d_m'' \ln [d_m''e] - d_m' \ln \frac{d_m''}{e} + d_m'' \ln \frac{N}{d_m''},
\end{align*}
since $\ln \frac{e^3}{(e-1)^2 (e+1)} \approx 0.604$, and for $N \geq 4$ we have $\ln(1-3e^{-N-1}) \geq \ln(1-3e^{-5}) \approx -0.0204$. This leads to the desired result.
\end{proof}
From this lemma, the penalty term in  \eqref{Crit equation} behaves like $d_m''\ln\frac{N}{d_m''}$, which would correspond to the behavior of the first part when the dimensions are close but would penalize more those models with $d'_m \ll d_m''$.
\begin{customthm}{6.1}[Oracle inequality for $\hat{f}_{\hat{m}}$]
With $\mathcal{M}$ restricted to models such that $d'_m \leq N$ and for the choice of $m  \in \mathcal{M}$   corresponding to
\begin{align}
\hat{m} &\in \arg \min \limits_{m \in \mathcal{M}} \|y-\hat{f}_m\|_2^2 + \sigma^2K\pen(m), \\
\pen(m) &:= 2\ln \frac{1}{p_{m}} +(d'+1)\ln\frac{N}{d'} 
\end{align}
with $K=3a$, we obtain for all $a>1$,
\begin{multline*}
E_{f^*}[\|  \Proj \limits_{\mathcal{F}_{\hat{m}} }Y  - f^* \|^2]  \leq \arg\min \limits_{m \in \mathcal{M}} \bigg\{ \frac{a}{a-1} \mathbb{E}_{f^*}[\|  \Proj \limits_{\mathcal{F}_{m} }Y - f^*\|^2] \\
+ \frac{a^2 \sigma^2}{a-1} \left(7 + 3(d_m'+1)\ln\frac{N}{d_m'}+6\ln\frac{1}{p_m} \right) \bigg\}.
\end{multline*}
\end{customthm}
The proof of this theorem follows the line of reasoning described in \cite{Massart03}. To establish this result we rely on the Gaussian concentration inequality stated below and whose proof can be found in \cite{Cirelson76,Boucheron13}:

\begin{lemma}[Tsirelson-Ibragimov-Sudakov Inequality] \label{lem:exp_gauss}
Assume that $F\colon \mathbb{R}^d \to \mathbb{R}$ is $1$-Lipschitz and $X$ is a Gaussian random vector following $\mathcal{N}(0,\sigma^2 I)$. Then, there exists a variable $\xi \sim \textrm{exp}(1)$ following an exponential distribution of parameter 1, such that
\begin{align*} 
F(X) \leq \mathbb{E}[F(X)]+\sigma \sqrt{2\xi}.
\end{align*} 
\end{lemma}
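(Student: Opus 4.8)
\noindent
The plan is to recognize that the $\exp(1)$ representation is a repackaging of the sharp Gaussian tail bound, and then to prove that tail bound. First I will show the claim reduces to establishing
\[
\mathbb{P}\left(F(X) - \mathbb{E}[F(X)] \geq t\right) \leq \exp\left(-\frac{t^2}{2\sigma^2}\right), \qquad t \geq 0.
\]
Granting this, set $Z := \bigl(F(X) - \mathbb{E}[F(X)]\bigr)_+$ and $W := Z^2/(2\sigma^2)$. The tail bound gives $\mathbb{P}(W > w) = \mathbb{P}\bigl(F(X) - \mathbb{E}[F(X)] > \sigma\sqrt{2w}\bigr) \leq e^{-w}$ for every $w \geq 0$, so $W$ is stochastically dominated by an $\exp(1)$ law. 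By the standard monotone (quantile) coupling---enlarging the probability space with an independent uniform to handle any atoms of $W$ via the distributional transform---one constructs $\xi \sim \exp(1)$ with $\xi \geq W$ almost surely. Then $\sigma\sqrt{2\xi} \geq \sigma\sqrt{2W} = Z \geq F(X) - \mathbb{E}[F(X)]$, which is exactly the asserted inequality.

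\noindent
The analytic core is therefore the subgaussian Laplace-transform estimate
\[
\mathbb{E}\left[\exp\bigl(\lambda (F(X) - \mathbb{E}[F(X)])\bigr)\right] \leq \exp\left(\frac{\sigma^2\lambda^2}{2}\right), \qquad \lambda > 0,
\]
from which the tail bound follows by Markov's inequality optimized at $\lambda = t/\sigma^2$. To obtain it with the sharp constant I will run the Herbst argument on the Gaussian logarithmic Sobolev inequality. Writing $\gamma$ for the $\mathcal{N}(0,\sigma^2 I)$ law, the inequality states $\mathrm{Ent}_\gamma(g^2) \leq 2\sigma^2\,\mathbb{E}_\gamma[\|\nabla g\|^2]$ for smooth $g$. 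Applying it to $g = \exp(\lambda F/2)$ and using $\|\nabla F\| \leq 1$ turns the entropy inequality into a differential inequality for $H(\lambda) := \mathbb{E}[e^{\lambda F}]$; with $K(\lambda) := \lambda^{-1}\ln H(\lambda)$ this reads $K'(\lambda) \leq \sigma^2/2$, and integrating from $0$, where $K(0^+) = \mathbb{E}[F]$, yields the displayed Laplace bound.

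\noindent
Two points require care. First, $F$ is only Lipschitz, so $\nabla F$ need not exist everywhere; I will approximate $F$ by a mollification $F_\varepsilon := F * \rho_\varepsilon$, which is smooth, $1$-Lipschitz and satisfies $\|\nabla F_\varepsilon\| \leq 1$ pointwise, run the argument for $F_\varepsilon$, and pass to the limit $\varepsilon \to 0$ using uniform convergence on compacts together with dominated convergence for the moment generating functions (finiteness being guaranteed since a Lipschitz image of a Gaussian is subgaussian). Second, the proof rests on the Gaussian log-Sobolev inequality itself, which I take as the main external input: it can be obtained by tensorization from the one-dimensional case or from hypercontractivity of the Ornstein--Uhlenbeck semigroup, and alternatively the sharp tail bound follows directly from the Gaussian isoperimetric inequality of Borell and Sudakov--Tsirelson. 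I expect securing the sharp constant $1/(2\sigma^2)$ to be the main obstacle, since an elementary Gaussian interpolation argument only yields the variance proxy $\pi^2\sigma^2/8$ in place of the optimal $\sigma^2/2$, and reaching the optimal value genuinely requires isoperimetry or log-Sobolev rather than interpolation alone.
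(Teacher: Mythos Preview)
Your argument is correct. Note, however, that the paper does not itself prove this lemma: it states it as a known result and refers the reader to Cirel'son--Ibragimov--Sudakov (1976) and Boucheron--Lugosi--Massart for the proof, so there is no in-paper argument to compare against. Your route---reducing the $\exp(1)$ formulation to the sharp Gaussian tail bound $\mathbb{P}\bigl(F(X)-\mathbb{E}[F(X)]\geq t\bigr)\leq e^{-t^2/(2\sigma^2)}$ via a quantile coupling, and then obtaining that bound from the Gaussian log-Sobolev inequality through the Herbst argument with a mollification step---is precisely the modern textbook treatment found in the second of those references. The original Tsirelson--Ibragimov--Sudakov and Borell proofs go through Gaussian isoperimetry instead, which you correctly flag as the alternative route to the sharp constant.
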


\begin{proof}[Proof of Theorem~\ref{thm:1}]
By definition of $\hat{m}$ we have that, for all $m \in \mathcal{M}$,
\begin{align*}
\|y-\hat{f}_{\hat{m}}\|_2^2 + \sigma^2K\pen(\hat{m}) \leq \|y-\hat{f_m}\|_2^2 + \sigma^2K\pen(m).
\end{align*} 
By expanding the squares and using $Y = f^* + \epsilon$ we obtain
\begin{align*}
\|y-\hat{f}_{m}\|_2^2 = \|f^*-\hat{f}_{m}\|_2^2 + 2\inner{\epsilon}{f^*-\hat{f}_{m}} + \|\epsilon\|_2^2.
\end{align*}
On the other hand, we have by expanding the squares,
\begin{multline}  \label{main equation}
\|f^*-\hat{f}_{\hat{m}}\|_2^2 \leq \|f^*-\hat{f}_{m}\|_2^2 + 2\inner{\epsilon}{f^*-\hat{f}_{m}} \\
- 2\inner{\epsilon}{f^*-\hat{f}_{\hat{m}}} + \sigma^2K\pen(m) -  \sigma^2K\pen(\hat{m}).
\end{multline}
The rest of the proof will consist in upper bounding the expected value of the terms of the right hand side of ~\eqref{main equation}.

Again, since $Y = f^* + \epsilon$ we also have, for all $m \in \mathcal{M}$, that
\begin{align}\label{Projection identity}
\hat{f}_{m} = \Proj \limits_{\mathcal{F}_{m}} Y= \Proj \limits_{\mathcal{F}_{m}} f^* + \Proj \limits_{\mathcal{F}_{m}} \epsilon. 
\end{align}
We can use Lemma~\ref{gaussian projection lemma} to derive a simple bound on $\mathbb{E}[\inner{\epsilon}{f^*-\hat{f}_m}]$ as follows:
\begin{align*}
\mathbb{E}[\inner{\epsilon}{f^*-\hat{f}_{m}}] &= -\mathbb{E}[\inner{\epsilon}{\hat{f}_m}] \\
&= -\mathbb{E}[\inner{\epsilon}{\Proj \limits_{\mathcal{F}_m} f^* + \Proj \limits_{\mathcal{F}_m} \epsilon}] \\
&= -\mathbb{E}[\inner{\epsilon}{\Proj \limits_{\mathcal{F}_m} \epsilon}] \\
&= -\mathbb{E}[\|\Proj \limits_{\mathcal{F}_{m}}\epsilon\|_2^2] \\
&= -\sigma^2 d_m' \leq 0,
\end{align*}
so we can discard this term since it has a negative contribution of small order on the bound.

To bound $2\inner{\epsilon}{\hat{f}_{\hat{m}} - f^*}$ we use Young's inequality $2 \inner{u}{v} \leq a\|u\|_2^2+\frac{1}{a}\|v\|_2^2$ for all $a>0$ as follows:
\begin{align*}
2 \inner{\epsilon}{\hat{f}_{\hat{m}} - f^*} &= 2 \inner{\Proj \limits_{\mathcal{F}_{\hat{m}}\oplus \spanf{f^*}} \epsilon}{\hat{f}_{\hat{m}} - f^*} \\
&= 2 \inner{\Proj \limits_{\mathcal{F}_{\hat{m}}\ominus \spanf{f^*}} \epsilon + \Proj \limits_{\spanf{f^*}} \epsilon}{\hat{f}_{\hat{m}} - f^*} \\
&\leq  a\|\Proj \limits_{\mathcal{F}_{\hat{m}}\ominus \spanf{f^*}} \epsilon + \Proj \limits_{\spanf{f^*}} \epsilon\|_2^2+\frac{1}{a}\|\hat{f}_{\hat{m}} - f^*\|_2^2 \\
&=  a(\|\Proj \limits_{\mathcal{F}_{\hat{m}}\ominus \spanf{f^*}} \epsilon \|_2^2+\|\Proj \limits_{\spanf{f^*}} \epsilon\|_2^2)+\frac{1}{a}\|\hat{f}_{\hat{m}} - f^*\|_2^2, \qquad a > 0.
\end{align*}
This gives
\begin{align} \label{eq3: of main thm}
2 \inner{\epsilon}{\hat{f}_{\hat{m}} - f^*} -\frac{1}{a}\|\hat{f}_{\hat{m}} - f^*\|_2^2 \leq 
a\sigma^2 (\|\Proj \limits_{\mathcal{F}_{\hat{m}}\ominus \spanf{f^*}} \epsilon \|_2^2/\sigma^2+\|\Proj \limits_{\spanf{f^*}} \epsilon\|_2^2/\sigma^2).
\end{align} 
Since $\|\Proj \limits_{\spanf{f^*}} \epsilon\|_2^2/\sigma^2$ follows a $\chi_1^2$ distribution,
\begin{align}\label{eq1: of main thm}
\mathbb{E}[\|\Proj \limits_{\spanf{f^*}} \epsilon\|_2^2/\sigma^2] = 1.
\end{align}
Similarly, for all $m \in \mathcal{M}$, $\|\Proj \limits_{\mathcal{F}_{m} \ominus \spanf{f^*}} \epsilon \|_2^2/\sigma^2$ follows a $\chi_{\bar{d}_m}^2$ distribution, where
\begin{align*}
    \bar{d}_m := \dim (\mathcal{F}_{m} \ominus \spanf{f^*}) =
    \begin{cases*}
      d_m', & if $f^* \in \mathcal{F}_m$, \\
      d_m'+1,        & otherwise.
    \end{cases*}
\end{align*}
Thus,
\begin{align}\label{eq2: of main thm}
\mathbb{E}[\|\Proj \limits_{\mathcal{F}_{m} \ominus \spanf{f^*}} \epsilon \|_2^2/\sigma^2] = \bar{d}_m \leq d_m'+1.
\end{align}
We now use a maximal inequality to bound $\mathbb{E}(a\|\Proj \limits_{\mathcal{F}_{\hat{m}}\ominus \spanf{f^*}} \epsilon \|_2^2+\sigma^2K\pen(\hat{m}))$:
\begin{align} 
&\mathbb{E}\left[\frac{\|\Proj \limits_{\mathcal{F}_{\hat{m}} \ominus \spanf{f^*}} \epsilon \|_2^2}{\sigma^2} - \frac{K}{a} \pen(\hat{m})\right]\\
&\leq \mathbb{E}\left[\max \limits_{m \in \mathcal{M}} \frac{\|\Proj \limits_{\mathcal{F}_{\hat{m}} \ominus \spanf{f^*}} \epsilon \|_2^2}{\sigma^2} - \frac{K}{a} \pen(\hat{m})\right] \nonumber \\
&\leq \sum \limits_{m \in \mathcal{M}} \mathbb{E}\left[\max\left\{ 0, \frac{\|\Proj \limits_{\mathcal{F}_{\hat{m}} \ominus \spanf{f^*}} \epsilon \|_2^2}{\sigma^2} - \frac{K}{a} \pen(\hat{m})\right\}\right]. \label{eq5: of main thm}
\end{align}
On the other hand, since the norm is $1$-Lipschitz, the Gaussian concentration inequality from Lemma~\ref{lem:exp_gauss} implies that there is an exponential random variable $\xi \sim \textrm{exp}(1)$ such that
\begin{align*}
\|\Proj \limits_{\mathcal{F}_{m} \ominus \spanf{f^*}} \epsilon \|_2/\sigma &\leq \mathbb{E}[\|\Proj \limits_{\mathcal{F}_{m} \ominus \spanf{f^*}} \epsilon \|_2/\sigma]+ \sqrt{2\xi} \\
&\leq (\mathbb{E}[(\|\Proj \limits_{\mathcal{F}_{m} \ominus \spanf{f^*}} \epsilon \|_2/\sigma) ^2])^{1/2} + \sqrt{2\xi} \\
&\leq \sqrt{d_m'+1} + \sqrt{2\xi},
\end{align*}
where we used \eqref{eq2: of main thm} in the last step. Taking the square we obtain
\begin{align*}
\frac{\|\Proj \limits_{\mathcal{F}_{m} \ominus \spanf{f^*}} \epsilon \|_2^2}{\sigma^2}
&\leq \left(\sqrt{d_m'+1} + \sqrt{2\xi} \right)^2 \\
&\leq \left(\sqrt{d_m'+1} + \sqrt{2\ln\frac{1}{p_m}} + \sqrt{2 \max\left\{0,\xi-\ln\frac{1}{p_m}\right\}} \right)^2 \\
&\leq \left(\sqrt{(d_m'+1)\ln\frac{N}{d_m'}} + \sqrt{2\ln\frac{1}{p_m}} + \sqrt{2 \max\left\{0,\xi-\ln\frac{1}{p_m}\right\}} \right)^2 \\
&\leq 3(d_m'+1)\ln\frac{N}{d_m'} + 6\ln\frac{1}{p_m} + 6\max\left\{0,\xi-\ln\frac{1}{p_m}\right\} \\
&= 3\pen(m) + 6\max\left\{0,\xi-\ln\frac{1}{p_m}\right\},
\end{align*}
where we used the inequalities $\sqrt{a+b} \leq \sqrt{a} + \sqrt{b}$ in the second step, the assumption $N \geq ed_m'$ in the third step and the inequality $(a+b+c)^2 \leq 3a^2+3b^2+3c^2$ in the fourth step. Since the second term in the last line is nonnegative, this implies also that
\begin{align*}
\max\left\{0, \frac{\|\Proj \limits_{\mathcal{F}_{m} \ominus \spanf{f^*}} \epsilon \|_2^2}{\sigma^2} - 3\pen(m) \right\} \leq 6\max\left\{0,\xi-\ln\frac{1}{p_m}\right\}.
\end{align*}
On the other hand, we have that
\begin{align*}
\mathbb{E}\left[\max\left\{0,\xi-\ln\frac{1}{p_m}\right\}\right] &= \int_{0}^{\infty} \max\left\{0,t - \ln\frac{1}{p_m}\right\} e^{-t}dt \\
&= \int_{\ln\frac{1}{p_m}}^{\infty} \left(t-\ln\frac{1}{p_m}\right) e^{-t} dt \\
&= -\left[\left(t+1-\ln\frac{1}{p_m}\right) e^{-t}\right]_{t = \ln\frac{1}{p_m}}^{\infty}\\
&=p_m,
\end{align*}
hence taking $K=3a$ in ~\eqref{eq5: of main thm} yields
\begin{align*}
&\mathbb{E}\left[\frac{\|\Proj \limits_{\mathcal{F}_{\hat{m}} \ominus \spanf{f^*}} \epsilon \|_2^2}{\sigma^2} - \frac{K}{a} \pen(\hat{m})\right] \\
&\leq \sum \limits_{m \in \mathcal{M}} \mathbb{E}\left[\max\left\{ 0, \frac{\|\Proj \limits_{\mathcal{F}_{\hat{m}} \ominus \spanf{f^*}} \epsilon \|_2^2}{\sigma^2} - 3 \pen(\hat{m})\right\} \right] \\
&\leq 6\sum \limits_{m \in \mathcal{M}} \mathbb{E}\left[\max\left\{0,\xi-\ln\frac{1}{p_m}\right\}\right] \\
&\leq 6\sum \limits_{m \in \mathcal{M}} p_m \\
&\leq 6.
\end{align*}
Combining the last result and equations \eqref{eq3: of main thm}, \eqref{eq1: of main thm} and \eqref{eq5: of main thm} brings us to
\begin{align*}
&\mathbb{E}\left[2 \inner{\epsilon}{\hat{f}_{\hat{m}} - f^*} -\frac{1}{a}\|\hat{f}_{\hat{m}} - f^*\|_2^2 - 3a\sigma^2 \pen(\hat{m})\right] \\
&\leq a\sigma^2\mathbb{E}\left[\frac{\|\Proj \limits_{\mathcal{F}_{\hat{m}}\ominus \spanf{f^*}} \epsilon \|_2^2}{\sigma^2} + \frac{\|\Proj \limits_{\spanf{f^*}} \epsilon\|_2^2}{\sigma^2} - 3 \pen(\hat{m}) \right] \\
&\leq a\sigma^2 \sum \limits_{m \in \mathcal{M}} \mathbb{E}\left[\max\left\{ 0, \frac{\|\Proj \limits_{\mathcal{F}_{m} \ominus \spanf{f^*}} \epsilon \|_2^2}{\sigma^2} - 3 \pen(m)\right\} \right] + a\sigma^2 \\
&\leq 7a\sigma^2.
\end{align*}
Going back to \eqref{main equation}, substituting $K$ by its value and subtracting $\frac{1}{a} \|\hat{f}_{\hat{m}} - f^*\|_2^2$ from both sides we obtain
\begin{multline*}
\frac{a-1}{a} \| \Proj \limits_{\mathcal{F}_{\hat{m}} }Y - f^*\|_2^2 \leq
\|f^*- \Proj \limits_{\mathcal{F}_m }Y \|_2^2 - 2\inner{\epsilon}{ \Proj \limits_{\mathcal{F}_{m} }Y  - f^*} \\
+ 2 \inner{\epsilon}{ \Proj \limits_{\mathcal{F}_{\hat{m}} }Y  - f^*} -\frac{1}{a}\| \Proj \limits_{\mathcal{F}_{\hat{m}} }Y  - f^*\|_2^2 - 3a\sigma^2 \pen(\hat{m}) +3a\sigma^2 \pen(m).
\end{multline*}
Taking the minimum of this expression over $m \in \mathcal{M}$ and the expectation, and omitting negative terms we obtain the desired result for all $a>1$:
\begin{multline*} 
E_{f^*}[\|  \Proj \limits_{\mathcal{F}_{\hat{m}} }Y  - f^* \|^2]  \leq \arg\min \limits_{m \in \mathcal{M}} \{ \frac{a}{a-1} \mathbb{E}_{f^*}[\|  \Proj \limits_{\mathcal{F}_{m} }Y - f^*\|^2] \\
+ \frac{a^2 \sigma^2}{a-1} (7 + 3(d_m'+1)\ln\frac{N}{d_m'}+6\ln\frac{1}{p_m} ) \}.
\end{multline*}
\end{proof}

\begin{customcor}{6.1}
For the set of models described in ~\eqref{regression} with $f^* \in \mathcal{F}_{m^*}$ the following properties hold:

\begin{itemize}
\item Adaptation and Risk Upper bound: The following adaptive upper bound in terms of $d_{m^*}'$ and $d_{m^*}''$ holds for $a=2$:
\begin{align*}
E_{f^*}[\|  \Proj \limits_{\mathcal{F}_{\hat{m}} }Y  - f^* \|^2] &\leq 4 \sigma^2 \bigg( 7 + 3(d_{m^*}'+1)\ln\frac{N}{d_{m^*}'}
\\
&+6\bigg(d_{m^*}' \ln[d_{m^*}''e^{\frac{13}{6}}] + d_{m^*}'' \ln[d_{m^*}'e^2] + d_{m^*}'' \ln \frac{N}{d_{m^*}''} \bigg) \bigg).
\end{align*}

\item Consistency: If $d_{m^*}'' = o(N / \ln N)$, then $\lim_{N \to \infty } N^{-1} \mathbb{E}_{f^*}[\| \hat{f}_{\hat{m}}- f^*\|^2] = 0$.
\end{itemize}
\end{customcor}
\begin{proof}
Equation~\eqref{oracle inequality} implies in particular that for $m^*$ such that $f^* \in \mathcal{F}_{m^*}$ we obtain  
\begin{multline*} 
E_{f^*}[\|  \Proj \limits_{\mathcal{F}_{\hat{m}} }Y  - f^* \|^2]  \leq  \frac{a}{a-1} \mathbb{E}_{f^*}[\|  \Proj \limits_{\mathcal{F}_{m^*} }Y - f^*\|^2] \\
+ \frac{a^2 \sigma^2}{a-1} \left(7 + 3(d_{m^*}'+1)\ln\frac{N}{d_{m^*}'}+6\ln\frac{1}{p_{m^*}} \right) .
\end{multline*}
To simplify the first expectation of the right hand side,
we can use ~\eqref{Projection identity} and Lemma~\ref{gaussian projection lemma} to obtain
\begin{align*}
\mathbb{E}_{f^*}[\|  \Proj \limits_{\mathcal{F}_{m^*} }Y - f^*\|^2] &= \mathbb{E}_{f^*}[\|  \Proj \limits_{\mathcal{F}_{m^*} }\epsilon \|^2]
= \sigma^2d_{m^*}'.
\end{align*}
Then, using Lemma~\ref{important bounds} we can upper bound the rest of the right hand side, and choosing $a= \frac{C}{C-1}$ yieds, for all $C>1$,
\begin{align*}
&E_{f^*}[\|  \Proj \limits_{\mathcal{F}_{\hat{m}} }Y  - f^* \|^2] \\
&\leq C\sigma^2 d_{m^*}' + \frac{C^2 \sigma^2}{C-1} \big(7 + 3(d_{m^*}'+1)\ln\frac{N}{d_{m^*}'}+ \\
&\qquad \qquad \qquad 6\big(d_{m^*}' \ln[d_{m^*}''e^2] + d_{m^*}'' \ln[d_{m^*}'e^2] + d_{m^*}'' \ln \frac{N}{d_{m^*}''} \big) \big) \\
&= \frac{C^2 \sigma^2}{C-1}\big( \big(1-\frac{1}{C}\big) d_{m^*}' +  7 + 3(d_{m^*}'+1)\ln\frac{N}{d_{m^*}'} \\
&\qquad \qquad \qquad +6\big(d_{m^*}' \ln[d_{m^*}''e^2] + d_{m^*}'' \ln[d_{m^*}'e^2] + d_{m^*}'' \ln \frac{N}{d_{m^*}''} \big) \big) \\
&\leq \frac{C^2 \sigma^2}{C-1}\big( 7 + 3(d_{m^*}'+1)\ln\frac{N}{d_{m^*}'} \\
&\qquad \qquad \qquad +6\big(d_{m^*}' \ln[d_{m^*}''e^{\frac{13}{6}}] + d_{m^*}'' \ln[d_{m^*}'e^2] + d_{m^*}'' \ln \frac{N}{d_{m^*}''} \big) \big).
\end{align*}
This upper bound achieves a minimum for $C = 2$, yielding the Adaptation and Risk Upper bound result:
\begin{multline*}
E_{f^*}[\|  \Proj \limits_{\mathcal{F}_{\hat{m}} }Y  - f^* \|^2] \leq 4 \sigma^2 \big( 7 + 3(d_{m^*}'+1)\ln\frac{N}{d_{m^*}'} \\
+6\big(d_{m^*}' \ln[d_{m^*}''e^{\frac{13}{6}}] + d_{m^*}'' \ln[d_{m^*}'e^2] + d_{m^*}'' \ln \frac{N}{d_{m^*}''} \big) \big).
\end{multline*}
Given that $d_{m^*}' \leq d_{m^*}'' \leq N$, this last equation also implies that
\begin{equation*}
E_{f^*}[\|  \Proj \limits_{\mathcal{F}_{\hat{m}} }Y  - f^* \|^2] = \mathcal{O}(d_{m^*}'\ln N + d_{m^*}''\ln N) = \mathcal{O}( d_{m^*}''\ln N).
\end{equation*} 
Hence, as long as $d_{m^*}'' = o(N/\ln N)$, we obtain $\lim_{N \to \infty } N^{-1} \mathbb{E}_{f^*}[\| \hat{f}_{\hat{m}}- f^*\|^2] = 0$, which establishes the Consistency result.
\end{proof}
\end{document}